\documentclass[twoside]{article}
\usepackage[accepted]{aistats2023}
\usepackage{microtype}
\usepackage{graphicx}
\usepackage{subfigure}
\usepackage{booktabs} 
\usepackage{shortbold}
\usepackage{hyperref}

\usepackage{amsmath}
\usepackage{multicol}
\usepackage{amssymb}
\usepackage{mathtools}
\usepackage{amsthm}
\usepackage{shortbold}
\usepackage[capitalize,noabbrev]{cleveref}
\usepackage[round]{natbib}

\bibliographystyle{apalike}
\theoremstyle{plain}
\newtheorem{theorem}{Theorem}[section]

\theoremstyle{definition}
\newtheorem{definition}[theorem]{Definition}

\theoremstyle{remark}
\newtheorem{remark}[theorem]{Remark}
\usepackage{xr}
\usepackage[textsize=tiny]{todonotes}

\begin{document}

\twocolumn[

\aistatstitle{Adversarial robustness of VAEs through the lens of local geometry}

\aistatsauthor{Asif Khan \And Amos Storkey}

\aistatsaddress{School of Informatics, \\ University of Edinburgh, U.K. \And School of Informatics,\\ University of Edinburgh, U.K. }]

\begin{abstract}
In an unsupervised attack on variational autoencoders (VAEs), an adversary finds a small perturbation in an input sample that significantly changes its latent space encoding, thereby compromising the reconstruction for a fixed decoder. A known reason for such vulnerability is the distortions in the latent space resulting from a mismatch between approximated latent posterior and a prior distribution. Consequently, a slight change in an input sample can move its encoding to a low/zero density region in the latent space resulting in an unconstrained generation. This paper demonstrates that an optimal way for an adversary to attack VAEs is to exploit a directional bias of a stochastic pullback metric tensor induced by the encoder and decoder networks. The pullback metric tensor of an encoder measures the change in infinitesimal latent volume from an input to a latent space. Thus, it can be viewed as a lens to analyse the effect of input perturbations leading to latent space distortions. We propose robustness evaluation scores using the eigenspectrum of a pullback metric tensor. Moreover, we empirically show that the scores correlate with the robustness parameter $\beta$ of the $\beta-$VAE. Since increasing $\beta$ also degrades reconstruction quality, we demonstrate a simple alternative using \textit{mixup} training to fill the empty regions in the latent space, thus improving robustness with improved reconstruction.
\end{abstract}

\section{INTRODUCTION}
\label{sec:intro}
Variational autoencoders (VAEs) belong to a class of deep generative models with a stochastic encoder-decoder network~\citep{kingma2013auto}. The encoder parameterises the variational distribution over latent variables conditioned on data samples, and the decoder estimates the data distribution through the latent distribution. Thus, VAEs serve a dual purpose of estimating data density and providing a rich representation space with uncertainty quantification. Recently several works have shown the application of VAEs to high-fidelity image generation~\citep{vahdat2020nvae}, music generation~\citep{roberts2017hierarchical}, video generation~\citep{wu2021greedy}, and many others. However,
like other machine learning models, VAEs are also susceptible to adversarial attacks, as demonstrated in several recent works~\citep{tabacof2016adversarial, gondim2018adversarial,willetts2019improving,kos2018adversarial,kuzina2021diagnosing}. In a typical setup, an adversary can attack a VAE by learning a small perturbation to an input that will lead to a large change in its latent encoding. This mechanism takes the form of an optimisation problem (introduced later in Equation~\ref{eq:optim}), which is generally solved using stochastic gradient methods~\citep{chakraborty2018adversarial,szegedy2013intriguing}. For a more comprehensive overview of attacks on VAEs and other generative models, we refer readers to~\citep{sun2021adversarial}.

The primary reason for the vulnerability of VAEs to attacks is the distortion in the latent space resulting from the mismatch between approximated posterior and latent space prior, also known as the \textit{posterior-prior gap}. Hence, the latent space is non-smooth, and the representations of similar inputs tend to be significantly distant in the space under the Euclidean metric. Methods such as~\citep{mathieu2019disentangling,willetts2019improving} have emphasised the importance of reducing the \textit{posterior-prior} for learning disentangled latent space that also improves the robustness of VAEs. $\beta-$ VAE~\citep{higgins2016beta} formulation introduces a parameter $\beta$ to directly control the gap. Other methods~\citep{chen2018isolating, kim2018disentangling, esmaeili2019structured} utilise the total correlation (TC) term to disentangle the latent coordinates of VAEs that also smooths the latent space proving helpful in improving the robustness. The limitation of these approaches is that they cause over-smoothing of the reconstructed samples and require a careful training mechanism to balance the regularisation term.~\citet{willetts2019improving} address this problem by introducing a TC term in hierarchical VAEs that provide robustness along with sharp reconstruction. However, most robustness methods use regularisation terms, which do not provide meaningful insights for quantifying robustness. Therefore, the notion of a small change in the input to the large change in latent space is not well established. Moreover, comparing these schemes relies on the visual inspection of distorted images at different magnitudes of adversarial loss. 

Much recently~\citet{camuto2021towards} proposed a theoretical framework that considers the uncertainty of encoder for studying the robustness of VAEs. However, the attacks in the input space do not consider the effect of geometry induced by an encoder or decoder maps. In another recent work~\citet{kuzina2021diagnosing} proposed an asymmetric KL term to capture the difference between the latent representation of input and its perturbation. They obtain the $\epsilon$ using the Jacobian of the mean latent code evaluated at the input perturbation. Nevertheless, their optimisation objective does not provide any geometrical insights. Likewise, they do not consider the contribution of the standard deviation term when computing Jacobian. Thus, do not account for the uncertainty in the representation space. Our paper shows that the geometry induced by the stochastic nature of encoder mapping provides the intuition behind the sensitivity of VAEs that can be a valuable tool for understanding robustness.

The central theme of our paper is to view the adversarial attack problem through the lens of manifold geometry. Unlike existing approaches treating the input space as Euclidean, we propose to utilise the stochastic pullback-metric tensor induced by the encoder map to measure the distance in the input space. We demonstrate that the distortion in the latent space results in a directional bias appearing in the form of an anisotropic metric tensor. We show that an optimal way for an adversary to design an attack is by moving along the dominant eigendirection of the induced metric tensor. Moreover, we propose scores to quantify robustness using the eigenspectrum of the pullback metric tensor. We hypothesise that methods reducing \textit{posterior-prior gap} improve robustness, directly influencing the induced metric tensors. To this end, we demonstrate that the proposed scores correlate with the $\beta$ parameter of $\beta-$VAE used to control robustness. To our knowledge, such a geometric view of the robustness of VAEs has not been previously investigated.

$\beta-$VAE is known to generate over-smooth samples for a high value of $\beta$. We introduce a simple mechanism using a mixup training~\citep{zhang2017mixup} that improves the robustness of VAE without much effect on reconstruction quality. Specifically, we introduce a regularisation loss term that forces the encoder to fill empty regions with linear interpolation in data space and ensures the decoder generates the respective interpolations, thus avoiding the issue of an unconstrained generation. We empirically demonstrate that such a training scheme improves the robustness measured by the proposed scores.
\begin{figure*}[t]
\begin{multicols}{2}
      \includegraphics[width=1.05\columnwidth,  trim={0cm 2cm 1cm 1cm}]{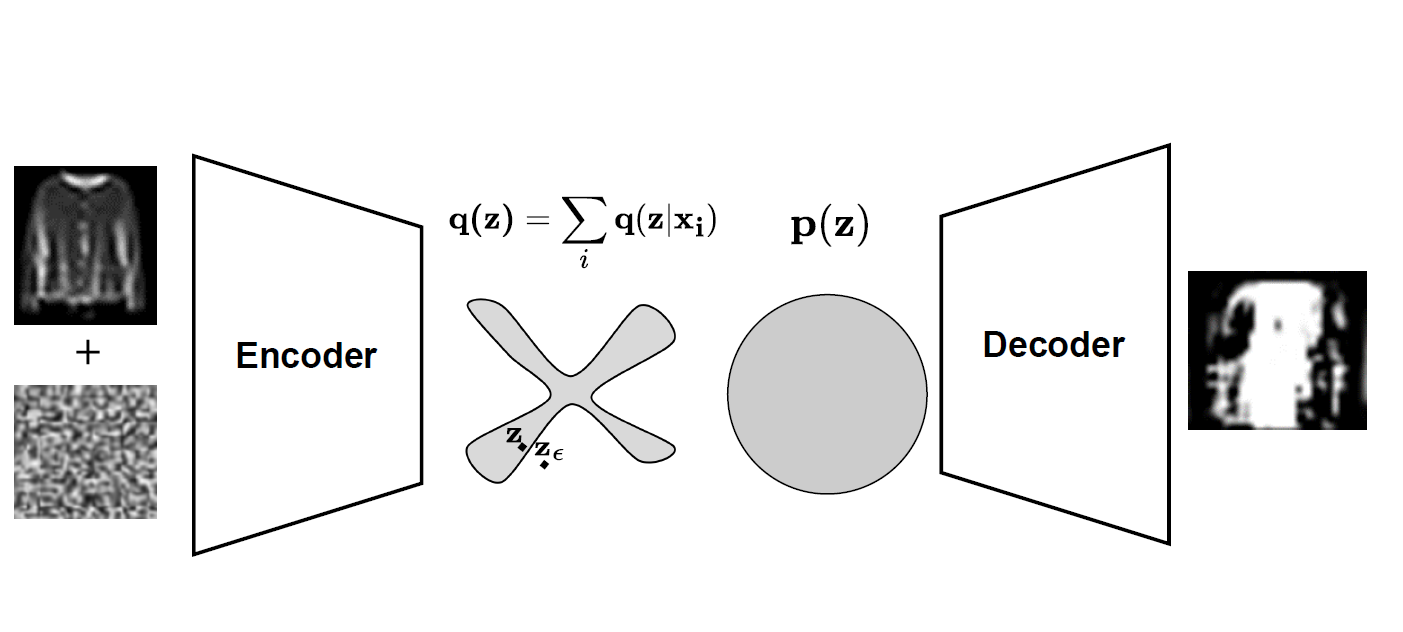}\par
    \qquad \includegraphics[width=0.7\columnwidth, trim={0cm 4cm 3cm 0}]{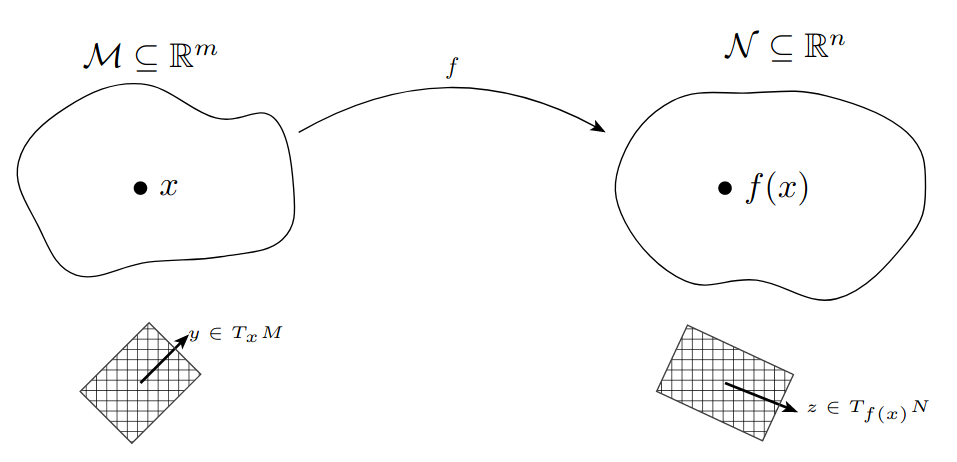} \par
\end{multicols}
\caption{\textit{Left:} Illustration that adversarial examples find non-smooth change in the latent encodings. A small perturbation in the input sample exploits a direction that maximally changes latent encoding by moving from high density to low/zero density region in the latent space. In this paper, we show an optimal perturbation can be found by moving along the eigendirection of the local pullback metric tensor of a data point. \textit{Right:} A smooth mapping $f$ from the data manifold $\mathcal{M}$ to the latent manifold $\mathcal{N}$ induces pullback metrics on $\mathcal{M}$. The Jacobian $\JBRV_{f(x)} = \frac{\partial f}{\partial \xBRV}$ is a linear map from a tangent vector $y\in T_x M$ to a tangent vector $z\in T_{f(x)} N$ that induces a pullback Riemannian metric tensor $\GBRV_{\xBRV}=\JBRV_{f(x)}^T\JBRV_{f(x)}$. The determinant of metric tensor $\GBRV_{\xBRV}$ represents the change in infinitesimal volume element when projected to the latent space.}
    \label{fig:manifoldmap}
\end{figure*}
\section{BACKGROUND}
\subsection{$\beta-$Variational autoencoder}
$\beta-$VAE~\citep{higgins2016beta} is a probabilistic encoder-decoder framework that simultaneously parameterises the latent distribution and emission distribution using deep neural networks. Consider a sample $\xBRV\in\XBRV=\mathbb{R}^N$ drawn from unknown data distribution $p(\xBRV)$, VAE learns an approximate posterior distribution $\qRV_\phi(\zBRV|\xBRV)$ over latent variables $\zBRV\in\ZBRV=\mathbb{R}^{d_z}$ using a stochastic encoder network, and an emission distribution $p_\theta(\xBRV|\zBRV)$ using a stochastic decoder network. The parameters $\theta$ of an encoder network and $\phi$  of a decoder network are learned by maximising the evidence lower bound (ELBO), 
\begin{equation}
        \mathbb{E}_{\zBRV \sim q(\zBRV|\xBRV)} [\log p_\theta(\xBRV| \zBRV)] - \beta KL [q_\phi(\zBRV|\xBRV)||p(\zBRV)]          
\label{eq:elbo_vae}    
\end{equation}
where $KL$ stands for Kullback-Leibler divergence~\citep{kullback1951information}, and a parameter $\beta$ controls the smoothness of latent distribution, setting $\beta=1$ is equivalent to a standard VAE~\citep{kingma2013auto}. 
\subsection{Adversarial attacks on VAEs}
The adversarial attacks on VAEs assume access to a pretrained encoder-decoder network. The adversary aims to exploit the capacity of VAE by finding small perturbations in an input sample that lead to a large change in its latent encoding or reconstruction. In a supervised scenario, the adversary starts with input and finds a minimal change that can match the reconstruction to the known target. In an unsupervised setting, the aim is to maximise the distance in latent codes, which will corrupt the reconstruction. Several recent developments have proposed mechanisms for designing an adversary and evaluating the robustness of existing deep generative models~\citep{tabacof2016adversarial,willetts2019improving}. Our paper takes a geometrical viewpoint of an unsupervised attack by analysing the metric tensor induced by the stochastic encoder network. We first introduce the unsupervised variational attack problem and later present our approach in Section~\ref{sec:method}. 

For an encoder neural network $f_{\theta}:\XBRV\rightarrow \ZBRV$ the unsupervised latent space attack optimises the objective~\citep{gondim2018adversarial},
\begin{align}
\label{eq:optim}
\max_{\etaB:||\etaB||_2\leq \eta_0} & \quad d (f_{\theta}(\xBRV), f_{\theta}(\xBRV+\etaB))
\end{align}
where $\eta_0$ is a small constant that decides the severity of the attack, and $d(.,.)$ is a distance function that measures the proximity in the latent space. A common approach to finding a corruption $\etaB$ is to use stochastic gradient methods~\citep{sun2021adversarial, willetts2019improving}.

\begin{figure*}[t]
    \centering
    \small{\textbf{MNIST}: (a) Original  $\downarrow$ Reconstruction  \qquad(b) Corrupted $\delta_1$ $\downarrow$ Reconstruction \qquad(c) Corrupted $\delta_2$ $\downarrow$ Reconstruction}
   \includegraphics[trim={0 0.15cm 1.2cm 0cm},height=3.8cm, width=0.84\linewidth]{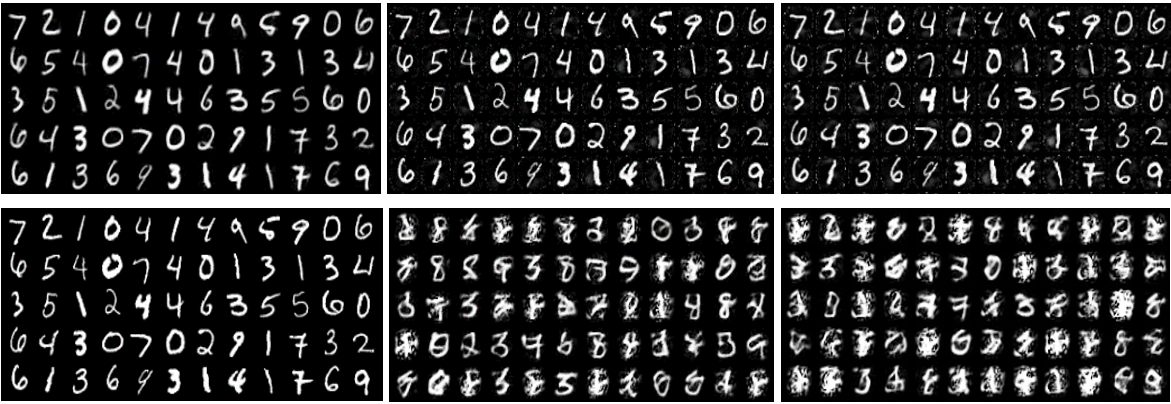}

    \small{\textbf{FMNIST}: (d) Original  $\downarrow$ Reconstruction  \qquad(e) Corrupted $\delta_1$ $\downarrow$ Reconstruction \qquad(f) Corrupted $\delta_2$ $\downarrow$ Reconstruction}
    \includegraphics[trim={0 0.15 1.2cm 0cm}, height=3.8cm, width=0.84\linewidth]{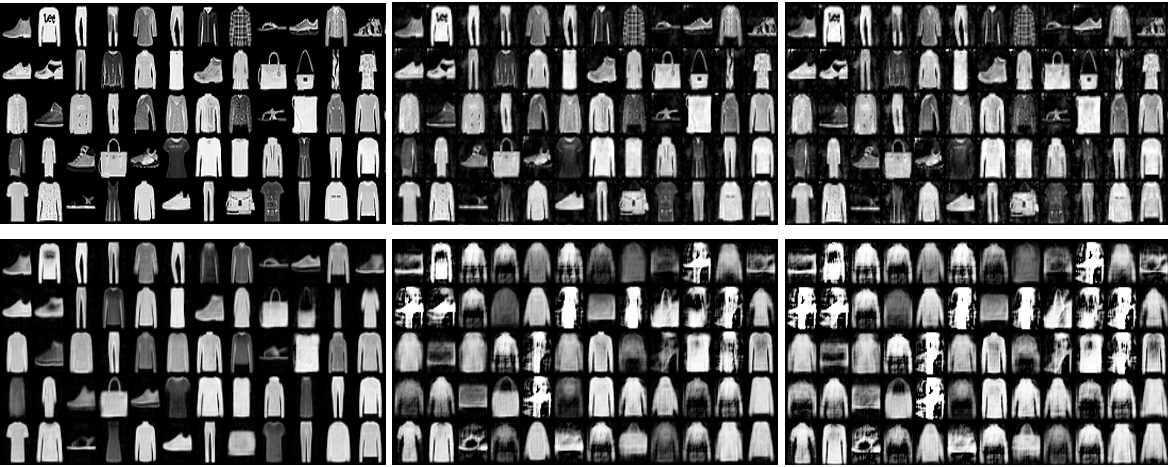}
    \caption{Illustration of adversarial attack along the dominant eigenvector of a stochastic pullback metric tensor. The first two rows are the results of MNIST data, and the bottom two are on the FashionMNIST dataset. We evaluate the reconstruction for original images and its two corrupted versions with different step sizes $\delta_1=0.5233$ and $\delta_2=0.7443$. Moving along eigendirection doesn't affect the input image but significantly changes its reconstruction.}
    \label{fig:attack}
\end{figure*}
\subsection{Latent space geometry}
In this section, we introduce definitions from Riemannian geometry relevant to the context of our work.
\begin{definition} A $n-$dimensional manifold $\gM$ is a topological space where for every $\xBRV\in\gM$ there exist a neighborhood region $\gV_{\xBRV}$ homeomorphic to $\gR^n$~\citep{lee2006riemannian}.
\end{definition}
\begin{definition}
A Riemannian metric for a smooth manifold $\gM$ is a bilinear, symmetric, positive definite map $\GBRV_x:\gT_{x}\gM\times \gT_{\xBRV}\gM\rightarrow\sR$ for all $\xBRV\in \gM$, where $T_{\xBRV}\gM$ is a tangent plane at point $x$ on the manifold~\citep{lee2006riemannian}.
\end{definition}
\begin{definition}
A smooth manifold $\gM$ with a Riemannian metric $\GBRV$ defined on every point of a manifold is called a Riemannian Manifold~\citep{lee2006riemannian}.
\end{definition}
\begin{definition}
Given a mapping $f:\gM\rightarrow\gN$ from smooth manifold $\gM$ to $\gN$, for any $\xBRV\in \gM$ the pull-back metric $\GBRV_{\xBRV}$ induced by the mapping $f$ is given as $\GBRV_{\xBRV} = \JBRV_{f(\xBRV)}^T \GBRV_{f(\xBRV)} \JBRV_{f(\xBRV)}$, where $\JBRV_{f(\xBRV)} = \frac{\partial f(\xBRV)}{\partial \xBRV}$.
\end{definition}

\label{sec:method}
\subsection{Latent space distortion}
To understand the latent distortion, we use an alternative derivation of ELBO~\citep{makhzani2015adversarial},
\begin{equation}
- RE[\log p_\theta(\xBRV| \zBRV)] - CE [q_\phi(\zBRV)||p(\zBRV)] + H[q_{\phi} (\zBRV| \xBRV)]   
\label{eq:elbo_vae1}    
\end{equation}
where $RE$ is the reconstruction error, $CE$ is the cross entropy, and $H$ is the entropy. For a Gaussian encoder, the $H$ term maximises the variance of latent encoding, the $RE$ term forces the distribution to be peakier (low variance), and the $CE$ term is to force the approximated posterior to match with the prior distribution. Optimizing the full objective means the encoder has to match a fixed prior and simultaneously have a peaky distribution. Due to this tradeoff, the encoder fails to match the fixed prior, resulting in the \textit{posterior-prior gap}. As a result, latent space is distorted with regions of low or zero density for which the decoder is unconstrained. For the derivation of Equation~\ref{eq:elbo_vae1} with a more comprehensive discussion, we refer readers to~\citep{makhzani2015adversarial, tomczak2022deep}. 

The \textit{posterior-prior gap} implies small changes in the input sample can result in significant changes in the latent encoding leading to abrupt changes in the reconstruction. Figure~\ref{fig:manifoldmap} on the left demonstrates an example of a vulnerability of VAEs. 

\subsection{Related work}
The distortions in the latent space of VAEs have been previously shown to result in limited generalisation capacity of VAEs~\citep{rezende2018taming, chen2020learning}.~\citet{rezende2018taming} further proposed a constrained optimisation to control the model performance. In our work, we take a geometric view of such distortion and use it to investigate the robustness of VAEs.

The use of encoder Jacobian matrices has previously appeared in approximating a tangent space of a data manifold, which captures the data points' sensitivity to its latent encoding~\citep{salah2011contractive, rifai2011manifold}. However, they don't consider the stochasticity of the encoder and decoder mappings. Several recent works treat the decoder mapping of VAEs as a smooth immersion and use the pullback as an induced metric in the latent space. The computation of such metrics has been helpful in various applications such as drawing on manifold samples, latent space interpolation, clustering, motion planning and many more~\citep{arvanitidis2017latent,yang2018geodesic, hauberg2018only,chen2018metrics, shao2018riemannian, arvanitidis2020geometrically,beik2022reactive}. 
~\citep{chen2020learning} propose to flatten the pullback metric tensor in the latent space induced by the decoder, which allows them to use standard Euclidean distance as a metric in latent space. The computation of a pullback in~\citet{chen2018metrics,yang2018geodesic,chen2020learning} does not consider the contribution of the uncertainty in the decoder mapping. It is, therefore, limited in its ability to capture the topological properties of the manifold. ~\citet{arvanitidis2017latent, hauberg2018only} proposed to consider the uncertainty of the decoder by treating the Gaussian decoder as a random projection of a deterministic manifold. This viewpoint allows them to treat the reconstruction space as a random manifold and the pullback metric tensors as stochastic, proving helpful in handling topological holes and low-density regions.

We want to remark much of the existing methods focus on improving the sampling in the latent space or improving the performance on tasks. Here, we show such distortions can be exploited by an adversary and discuss the importance of local geometry for evaluating and training robust VAEs. Previously~\citet{zhao2019adversarial, sun2020fisher, martin2020inspecting} studied the spectrum of Fisher information (pullback from probability simplex to the input space) of a classifier to investigate the robustness of adversarial perturbations. However, there is no such study for generative models to our knowledge. Also, the metric tensor considered in our paper considers the effect of uncertainty in the latent space, which is vital for understanding the latent distortions.
\section{ADVERSARIAL ATTACK EXPLOIT LOCAL GEOMETRY}
Consider an encoder $f_{\phi}$ as a smooth immersion from a data manifold to a latent manifold. We can then utilise the pullback metric tensor induced by $f_{\phi}$ to express the infinitesimal distance in the input space in terms of the local metric tensor of the latent space. Thus, unlike the existing methods that rely on Euclidean distance in the input space, we use the pullback metric tensor to measure the infinitesimal distance. 

Here, we first express the adversarial optimisation problem in terms of the pullback metric tensor. Next, we show that the adversary can exploit the directional bias of a metric tensor to design optimal attacks. 
\begin{remark}
The infinitesimal distance between the representation of any data point $\xBRV$ and its perturbation $\xBRV_{\eta}=\xBRV+\etaB$ under parametric encoder $f_{\theta}$ for $\etaB$ small in $l_2$ is approximated as,\newline
i) $d(f_{\theta}(\xBRV), f_{\theta}(\xBRV+\etaB))=\etaB^T \GBRV_{\xBRV} \etaB$, where $\GBRV_{\xBRV} = \JBRV_{f_{\theta}(\xBRV)}^T \JBRV_{f_{\theta}(\xBRV)}$ for locally flat latent manifold, \newline
ii) $d(f_{\theta}(\xBRV), f_{\theta}(\xBRV+\etaB))=\etaB^T \GBRV_{\xBRV} \etaB$, where $\GBRV_{\xBRV} = \JBRV_{f_{\theta}(\xBRV)}^T \GBRV_{\zBRV}\JBRV_{f_{\theta}(\xBRV)}$ for a latent Riemannian manifold equipped with metric tensor $\GBRV_{\zBRV}$.  Here $\GBRV_{\zBRV}=\JBRV_{g_{\phi}(\zBRV)}^T \JBRV_{g_{\phi}(\zBRV)}$ is a pullback under the parametric decoder mapping $g_\phi$.
\end{remark}
where $\JBRV_{f_{\theta}(\xBRV)}\in \mathbb{R}^{d_z\times N}$ is a Jacobian matrix, $d_z$ is the dimensionality of $\ZBRV$ and $N$ is the dimensionality of $\XBRV$.
The matrix $\GBRV_{\xBRV}$ is a symmetric, positive definite matrix known as a pullback metric tensor under the mapping $f_{\theta}$. We can use it to measure the local inner product for every $\xBRV$ in the input space $\xBRV\in \XBRV$. 

\subsection{Stochastic pull-back metric tensor} 

\citet{eklund2019expected, arvanitidis2017latent} proposed an expected pullback metric tensor for measuring distances in the latent space of deep generative models. Here we utilise the pullback of an encoder mapping to evaluate the robustness of representations. Consider a stochastic encoder mapping $f_{\theta}$ expressed as a combination of mean $\muB_{\theta}(\xBRV)$ and standard deviation $\sigmaB_{\theta} (\xBRV)$ parameterisations: $f_{\theta}(\xBRV) = \muB_{\theta}(\xBRV) + \epsilonB\odot \sigmaB_{\theta}(\xBRV)$,  where $\epsilonB\sim\gN(0,\IBRV_d)$, then a Jacobian of $f_{\theta}$ with respect to input $\xBRV$ can be expressed as $\JBRV_{f_\theta} = \JBRV_{\muB(\xBRV)} + \epsilonB \odot \JBRV_{\sigmaB(\xBRV)}$ and the pullback matrix $\GBRV_{\xBRV}$ as,
\begin{align}
 \GBRV_{\xBRV} =  (\JBRV_{\muB(\xBRV)} &+ \epsilonB \odot \JBRV_{\sigmaB(\xBRV)})^T  (\JBRV_{\muB(\xBRV)} + \epsilonB \odot \JBRV_{\sigmaB(\xBRV)}) \nonumber \\ 
 =\JBRV_{\muB(\xBRV)}^T \JBRV_{\muB(\xBRV)} &+ \JBRV_{\muB(\xBRV)}^T\epsilonB \JBRV_{\muB(\xBRV)} + \JBRV_{\sigmaB(\xBRV)}^T\epsilonB \JBRV_{\muB(\xBRV)} + \JBRV_{\sigmaB(\xBRV)}^T\epsilonB^2\JBRV_{\sigmaB(\xBRV)}\nonumber
\end{align}
We can view the latent space as a random projection of a deterministic manifold. Under the assumption the sample paths from $f_{\theta}$ are smooth, we can treat the metric tensor as a stochastic matrix. The metric of the random latent manifold can be estimated in expectation as $\hat{\GBRV}_{\xBRV} = \mathbb{E}_{\epsilonB\sim {p(\epsilonB})} [\GBRV_{\xBRV}]$. Since, $\epsilonB$ is a zero mean and unit covariance the $\mathbb{E}[\epsilonB]=0$ and $\mathbb{E}[\epsilonB^2]=1$ the final expected metric tensor is,
\begin{equation}
\label{eq:stochasticpull}
 \hat{\GBRV}_{\xBRV} =  \JBRV_{\muB(\xBRV)}^T\JBRV_{\muB(\xBRV)}  + \JBRV_{\sigmaB(\xBRV)}^T\JBRV_{\sigmaB(\xBRV)}
\end{equation}

\begin{theorem} Given a stochastic encoder mapping $f_{\theta}$, for an arbitrary data point $\xBRV\in \XBRV$ the adversarial perturbation $\xBRV_{\eta}$ under $l_2$ norm is optimal when moving along the eigendirection of a stochastic pullback metric tensor induced by $f_{\theta}$ at $\xBRV$.
\end{theorem}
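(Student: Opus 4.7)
The plan is to reduce the adversarial optimisation in Equation~\ref{eq:optim} to a constrained quadratic form maximisation and then apply the Rayleigh--Ritz characterisation of the largest eigenvalue. First I would invoke the Remark to replace the latent distance $d(f_\theta(\xBRV),f_\theta(\xBRV+\etaB))$ by its local quadratic approximation $\etaB^T \GBRV_{\xBRV}\etaB$, which is valid for $\|\etaB\|_2 \leq \eta_0$ with $\eta_0$ small by a first-order Taylor expansion of $f_\theta$ around $\xBRV$. Because $f_\theta$ is stochastic, $\GBRV_{\xBRV}$ is itself a random matrix, so I would take the expectation over the noise $\epsilonB\sim\mathcal{N}(0,\IBRV_d)$, which by the derivation leading to Equation~\ref{eq:stochasticpull} yields the deterministic symmetric positive semidefinite matrix $\hat{\GBRV}_{\xBRV} = \JBRV_{\muB(\xBRV)}^T\JBRV_{\muB(\xBRV)} + \JBRV_{\sigmaB(\xBRV)}^T\JBRV_{\sigmaB(\xBRV)}$. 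The adversary's problem then becomes
\begin{equation*}
\max_{\etaB:\|\etaB\|_2\leq \eta_0} \etaB^T \hat{\GBRV}_{\xBRV}\etaB .
\end{equation*}

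Next I would diagonalise $\hat{\GBRV}_{\xBRV} = \VBRV\Lambda\VBRV^T$ with orthonormal eigenvectors $\vBRV_1,\dots,\vBRV_N$ and eigenvalues $\lambda_1\geq \lambda_2 \geq \cdots \geq \lambda_N \geq 0$, and write an arbitrary feasible $\etaB = \sum_i c_i \vBRV_i$ with $\sum_i c_i^2 \leq \eta_0^2$. A direct computation gives $\etaB^T \hat{\GBRV}_{\xBRV}\etaB = \sum_i \lambda_i c_i^2 \leq \lambda_1 \sum_i c_i^2 \leq \lambda_1 \eta_0^2$, with equality iff all the mass of $\etaB$ is concentrated on $\vBRV_1$ (or, more generally, the top eigenspace if $\lambda_1$ is degenerate) and $\|\etaB\|_2 = \eta_0$. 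This is precisely the Rayleigh--Ritz / Courant--Fischer characterisation of the dominant eigenvalue, and it identifies the optimal adversarial perturbation as $\etaB^\star = \pm \eta_0\, \vBRV_1$, proving the claim.

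The conceptual obstacle is not the linear-algebra step, which is standard, but the justification of the quadratic approximation step in the stochastic setting: one must argue that maximising the expected latent displacement is the right proxy for the original objective in Equation~\ref{eq:optim}. I would handle this by noting that the attacker cannot control the sample of $\epsilonB$, so the natural worst-case-in-expectation attack is $\arg\max_\etaB \mathbb{E}_\epsilonB[\etaB^T \GBRV_{\xBRV}\etaB] = \arg\max_\etaB \etaB^T \hat{\GBRV}_{\xBRV}\etaB$, which is exactly what Equation~\ref{eq:stochasticpull} delivers. A brief remark should also point out that the analysis carries over verbatim to case (ii) of the Remark, where the latent manifold is itself Riemannian, since the resulting pullback is again a symmetric positive semidefinite matrix and the Rayleigh argument is unchanged.
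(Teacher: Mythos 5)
Your proposal is correct and rests on the same reduction the paper uses: approximate the latent displacement by the quadratic form $\etaB^T \hat{\GBRV}_{\xBRV}\etaB$ with the expected stochastic pullback metric of Equation~\ref{eq:stochasticpull}, then maximise over $\etaB$ of bounded $l_2$ norm. Where you diverge is in the finishing step. The paper enforces $\|\etaB\|_2=\eta_0$ as an equality constraint, introduces a Lagrange multiplier, and reads off the stationarity condition $\hat{\GBRV}_{\xBRV}\etaB=\lambdaB\etaB$, concluding that the eigenvector with the largest eigenvalue is the maximising direction; strictly speaking this only identifies all eigenvectors as stationary points and asserts which one is the maximiser. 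You instead diagonalise $\hat{\GBRV}_{\xBRV}$ and run the Rayleigh--Ritz inequality $\etaB^T\hat{\GBRV}_{\xBRV}\etaB=\sum_i\lambda_i c_i^2\leq\lambda_1\eta_0^2$, which proves global optimality of the top eigendirection directly, handles the inequality constraint (showing the optimum sits on the boundary since $\hat{\GBRV}_{\xBRV}$ is positive semidefinite), and covers the degenerate case where the top eigenvalue is repeated. Your additional remarks --- that the expectation over $\epsilonB$ is the right proxy because the attacker cannot control the noise draw, and that the argument carries over to case (ii) of the Remark where $\GBRV_{\zBRV}$ is the decoder pullback --- are points the paper states only implicitly (in the Remark and in Supplementary Equation~\ref{eq:pullnew}) rather than inside the proof. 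In short, your route buys a slightly more complete optimality argument at the cost of a longer derivation; the paper's Lagrangian route is more compact but leaves the identification of the global maximiser as an assertion.
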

\begin{proof}
By using the stochastic pull back metric tensor given in Equation~\ref{eq:stochasticpull}, we reformulate the adversarial attack optimisation of Equation~\ref{eq:optim} as,
\begin{align}
\label{eq:optim}
\max_{\etaB} & \quad \etaB^T \hat{\GBRV}_{\xBRV} \etaB\nonumber\\
&\qquad \qquad \text{subject to} \quad ||\etaB||_2 = \eta_0
\end{align}
Next, to solve the problem, we combine the constraints by introducing Lagrange multiplier $\lambdaB$,
\begin{equation}
\label{eq:optim3}
\max_{\etaB} \quad \etaB^T \hat{\GBRV}_{\xBRV} \etaB + \lambdaB ( \eta_0 - ||\etaB||_2^2)
\end{equation}
The closed-form solution of the above optimisation takes the form:
 \begin{equation}
     \hat{\GBRV}_{\xBRV} \etaB= \lambdaB \etaB
\end{equation}
thus a pair $(\lambdaB, \etaB)$  represents the eigenvalue and eigenvector of the stochastic pullback metric tensor, the eigenvector with the largest eigenvalue corresponds to the direction of maximal change. 
\end{proof}

\begin{figure}[t]
    \centering
    \small{(a)  $\beta=0.01$ \qquad \qquad (b) $\beta=1.0$ \qquad \qquad (c) $\beta=7.5$}\\
    \includegraphics[width=0.32\linewidth]{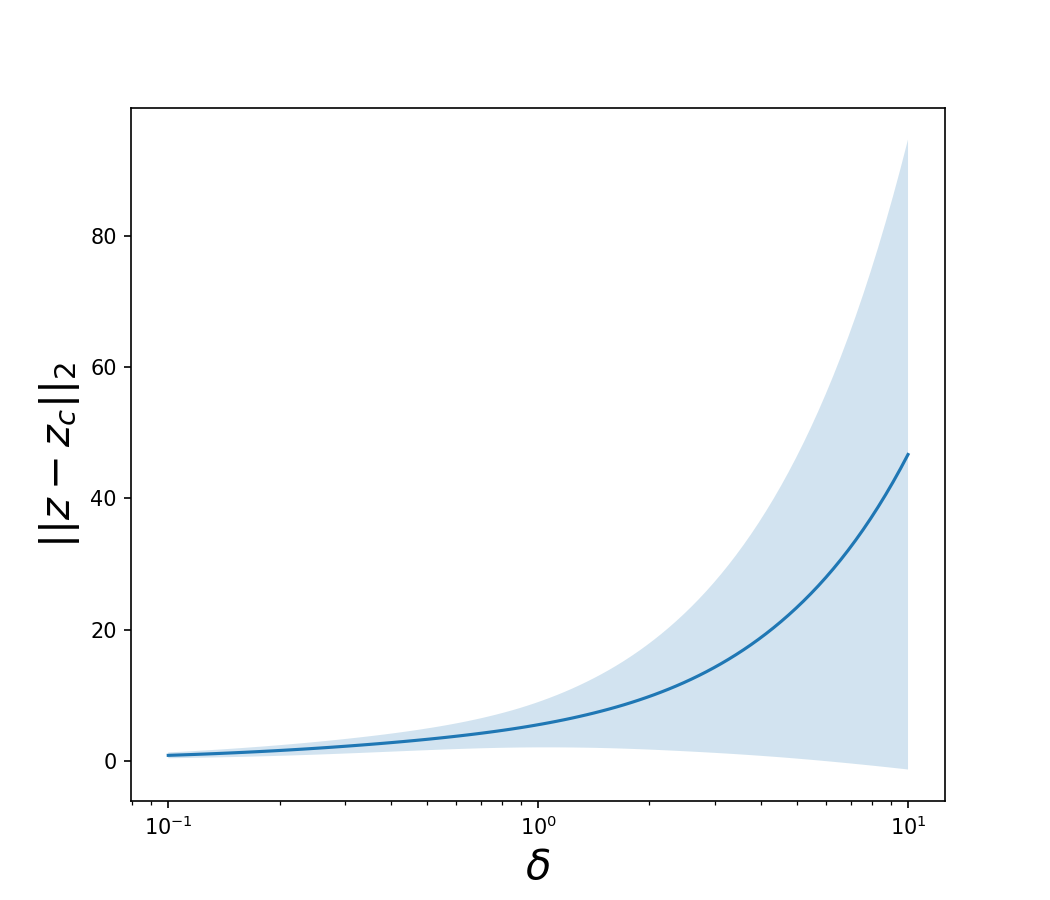}
    \includegraphics[width=0.32\linewidth]{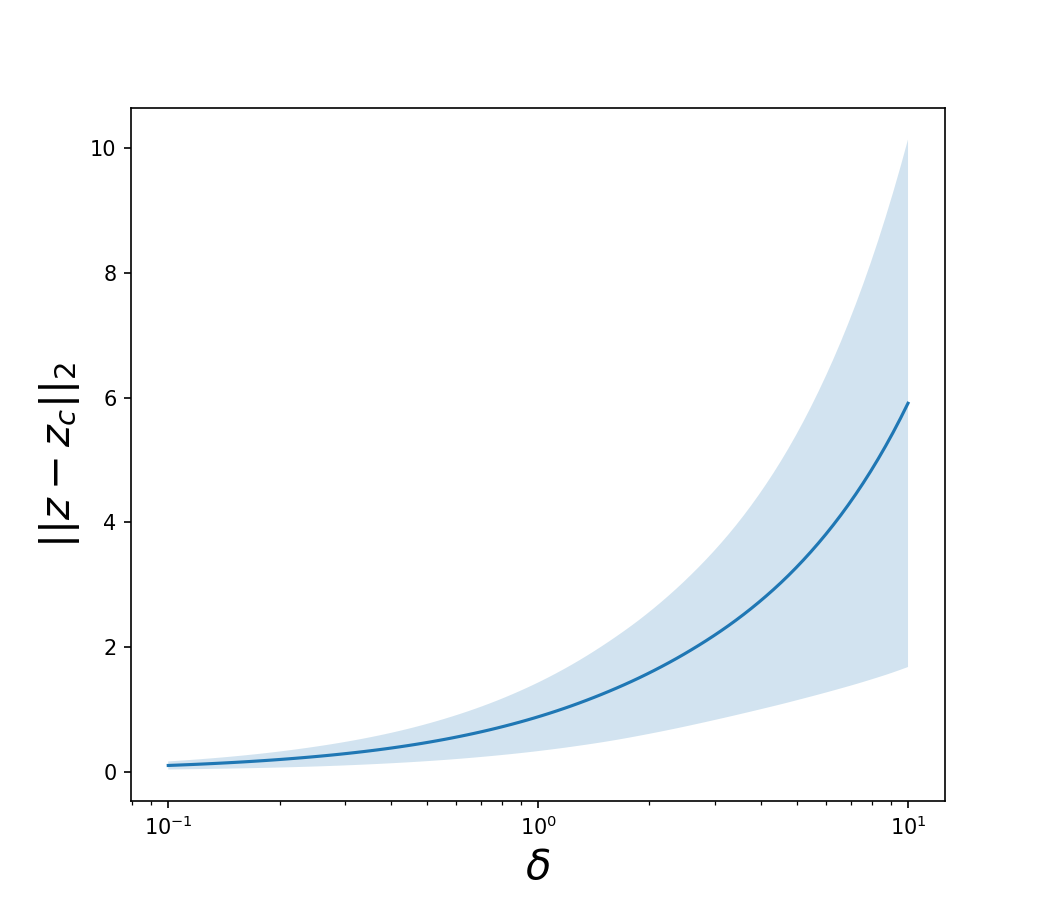}
    \includegraphics[width=0.32\linewidth]{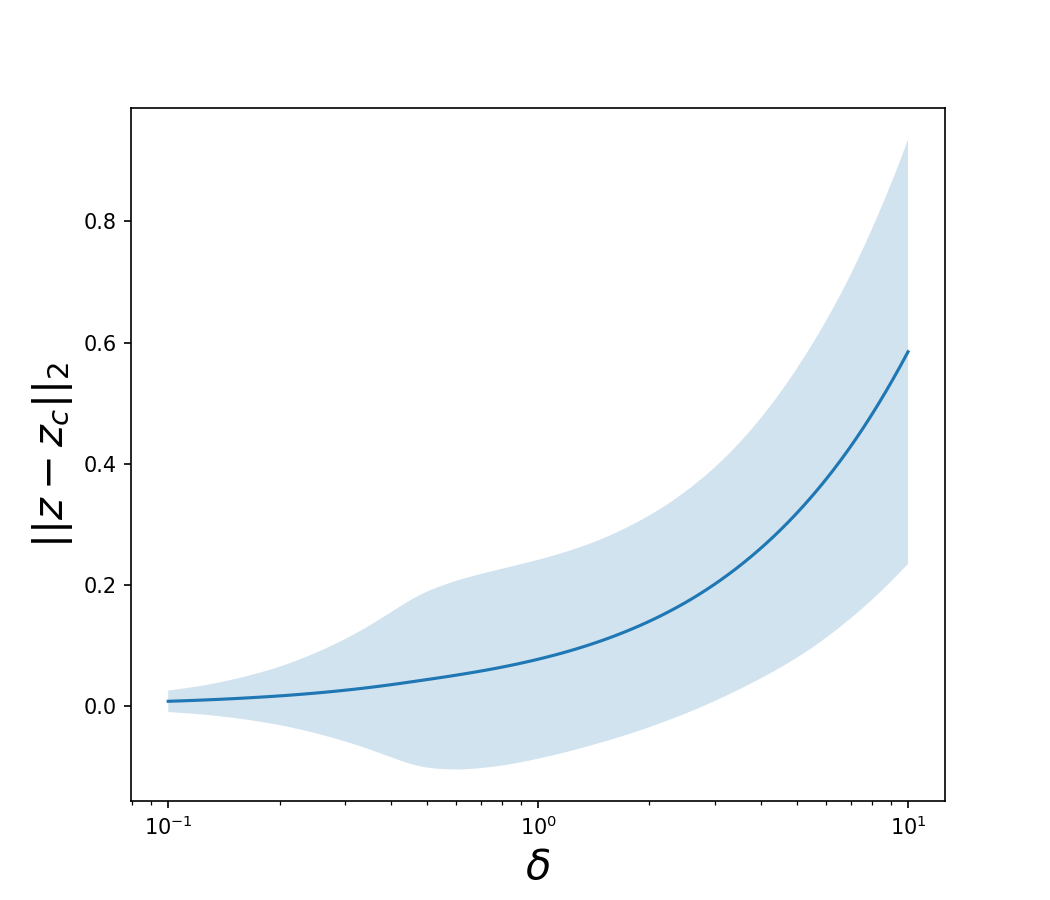}
    \caption{The plot shows the change in the latent encoding of $\beta-$VAE (in terms of Euclidean distance) for different values of $\beta$ when moving along the dominant eigendirection of a pullback metric tensor $\hat{\GBRV_{\xBRV}}$ with different step size $\delta$. We can see for small $\beta$, the changes are of much higher magnitude compared to larger $\beta$, demonstrating that increasing the $\beta$ makes the latent space more smooth.}
    \label{fig:latent_dist}
\end{figure}

Thus, for any given input $\xBRV$ an adversary can design an optimal attack by taking a step along the eigendirection of $\hat{\GBRV}_{\xBRV}$ as $\xBRV_{c} = \xBRV + \delta \lambdaB \etaB$, where $\delta$ is a step size. To compromise the reconstruction the step size $\delta$ can be chosen such that $||\xBRV - \hat{\xBRV}_{c}||_2>||\xBRV - \hat{\xBRV}||_2$, where $\hat{\xBRV}$ and $\hat{\xBRV}_{c}$ are reconstructions of original input and its corrupted version. 

\begin{figure*}[ht!]
\small{\textbf{CelebA}: (a) Original  $\downarrow$ Reconstruction  \qquad \qquad(b) Corrupted $\delta_1$ $\downarrow$ Reconstruction \qquad \qquad(c) Corrupted $\delta_2$ $\downarrow$ Reconstruction}
\begin{multicols}{3}
\includegraphics[height=1.85cm, width=5.25cm]{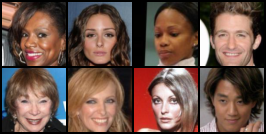}  \includegraphics[height=1.85cm, width=5.25cm]{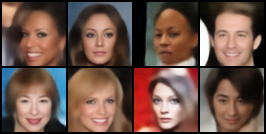}
\par
 \includegraphics[height=1.85cm, width=5.25cm]{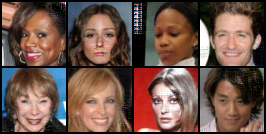}
\includegraphics[height=1.85cm, width=5.25cm]{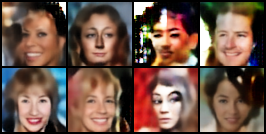} 
 \par
 \includegraphics[height=1.85cm, width=5.25cm]{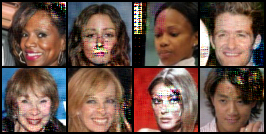}
 \includegraphics[height=1.85cm, width=5.25cm]{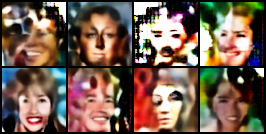}
\end{multicols}
\vspace{-0.25cm}
    \caption{Illustration of adversarial attack along the dominant eigenvector of a stochastic pullback metric tensor on CelebA dataset. We evaluate the reconstruction for original images and its two corrupted versions with different step sizes $\delta_1=0.5233$ and $\delta_2=0.7443$.}
    \label{fig:attackceleb}
\end{figure*}

\begin{figure*}[ht!]
    \centering
    \small{(a) Robustness evaluation of $\beta-$VAE on MNIST.}\\
    \includegraphics[width=0.24\linewidth]{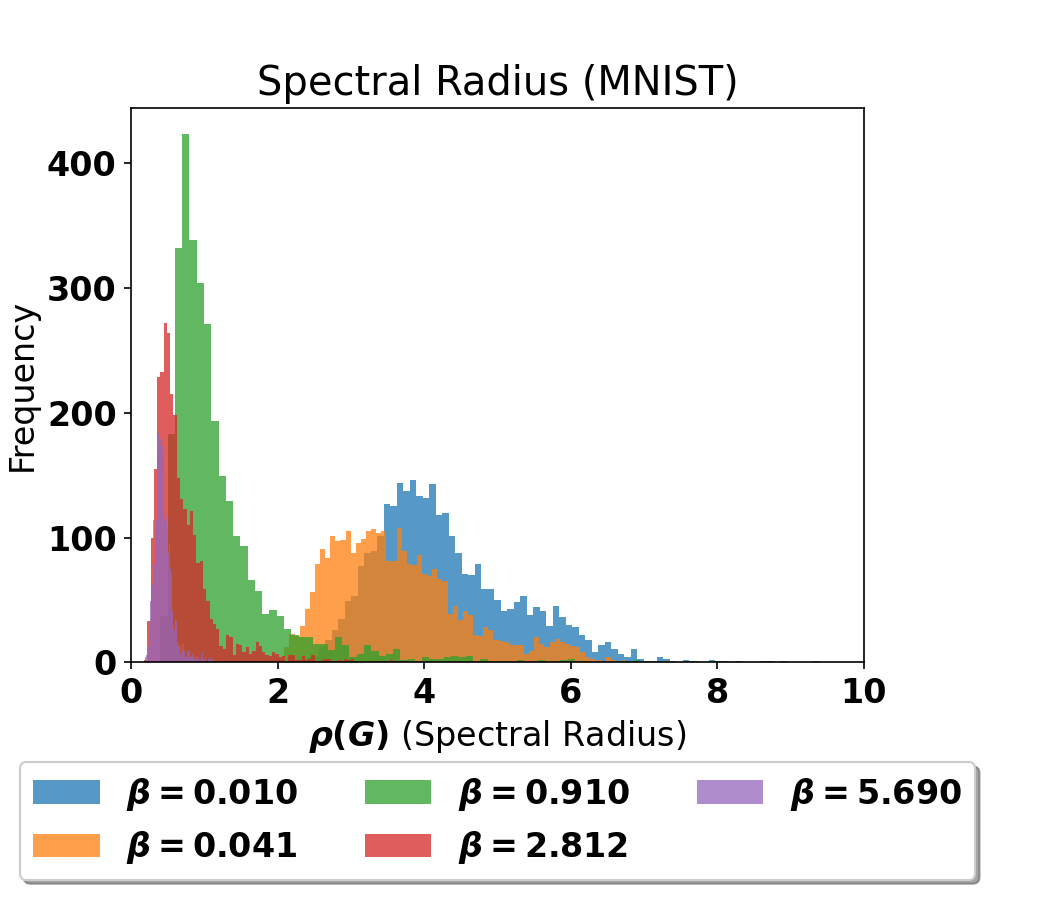}
    \includegraphics[width=0.24\linewidth]{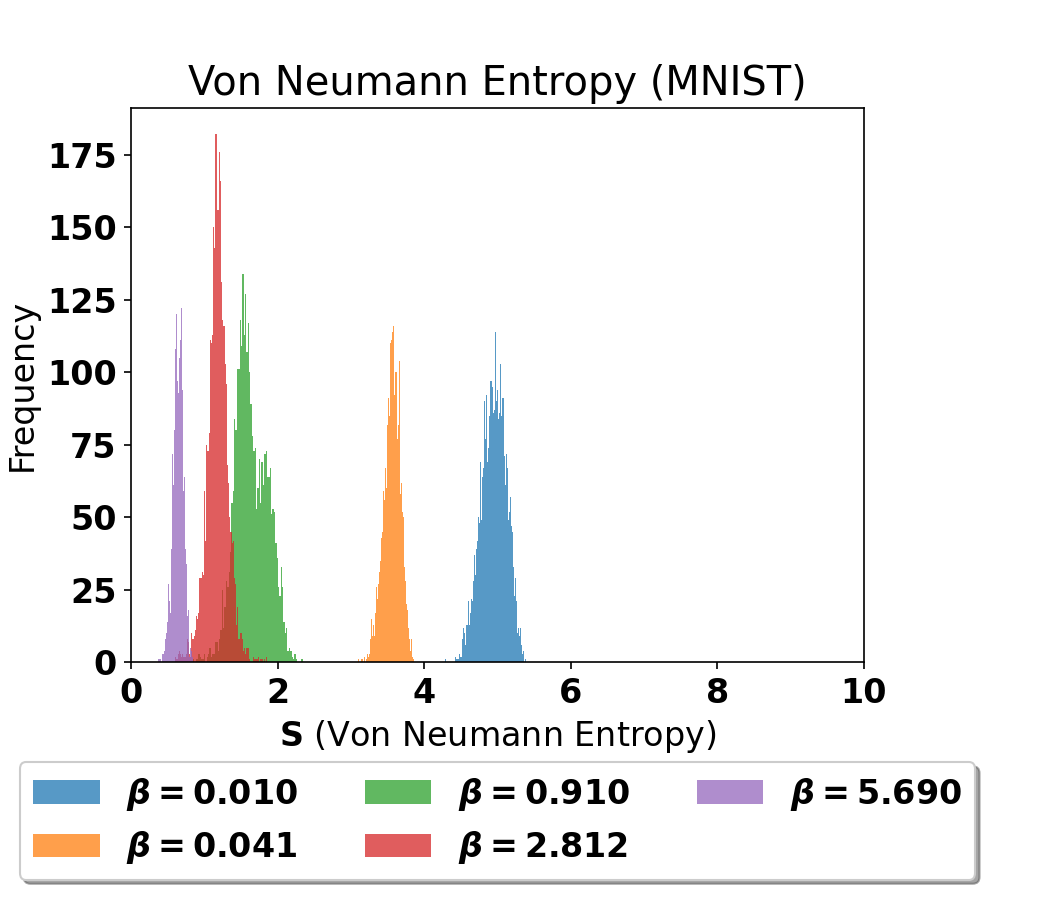}
    \includegraphics[width=0.24\linewidth]{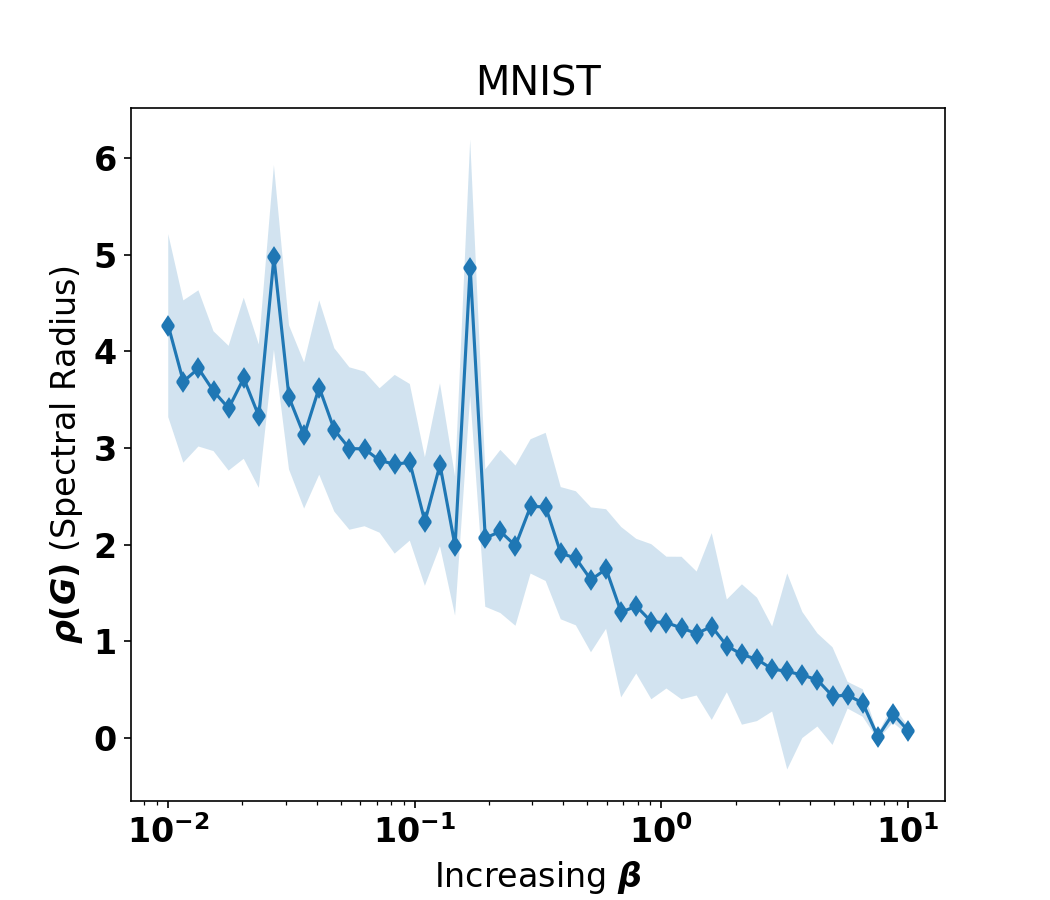}
    \includegraphics[width=0.24\linewidth]{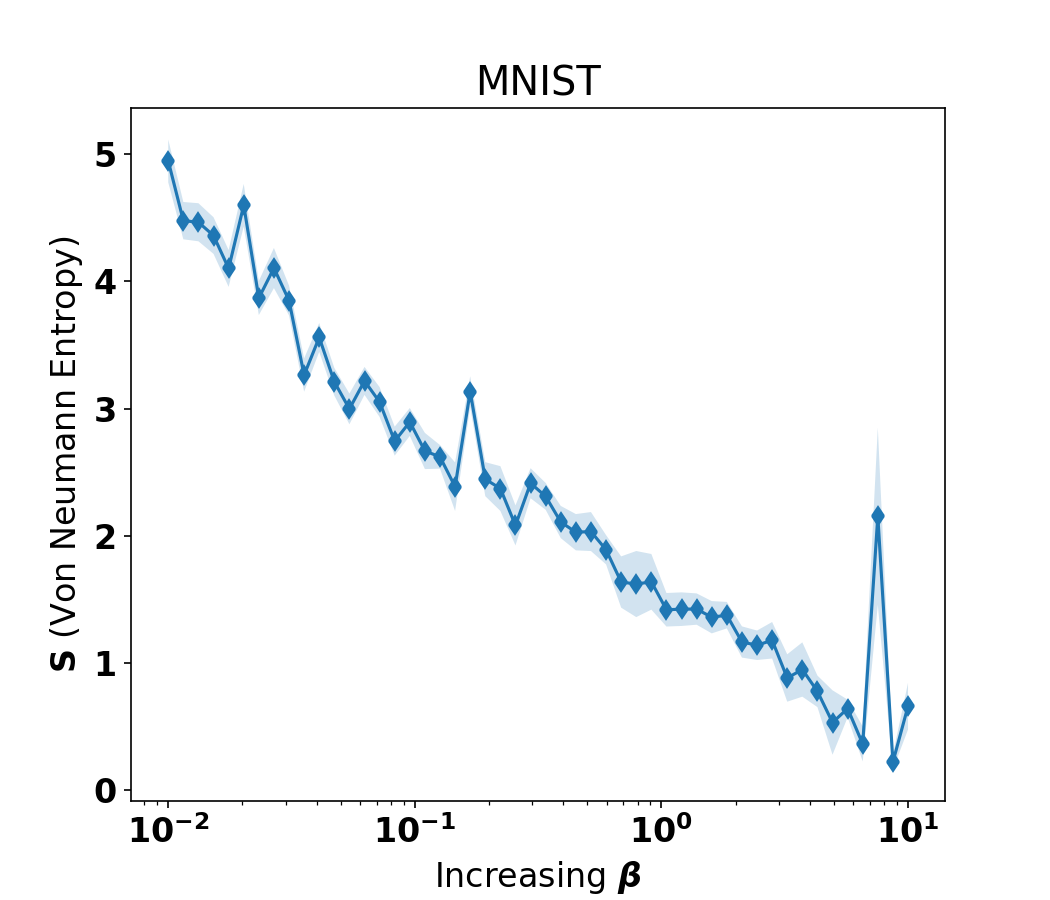}
    \includegraphics[width=0.24\linewidth]{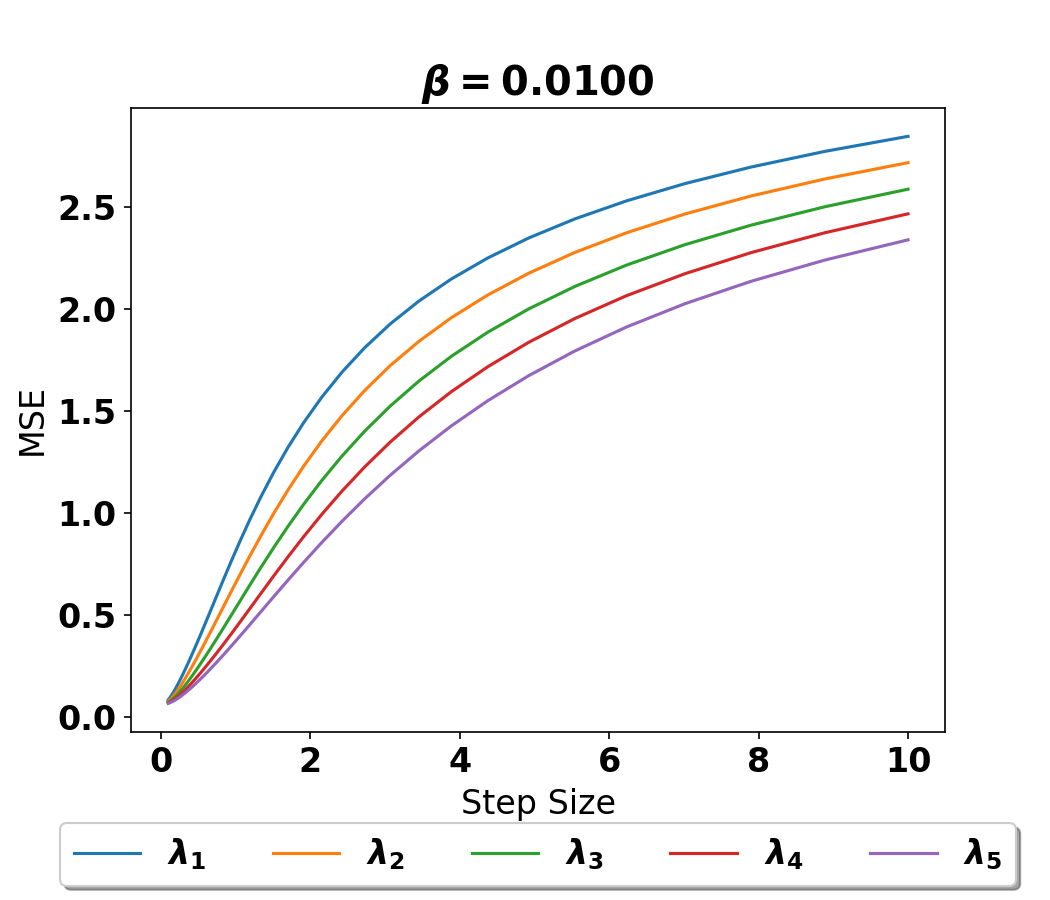}
    \includegraphics[width=0.24\linewidth]{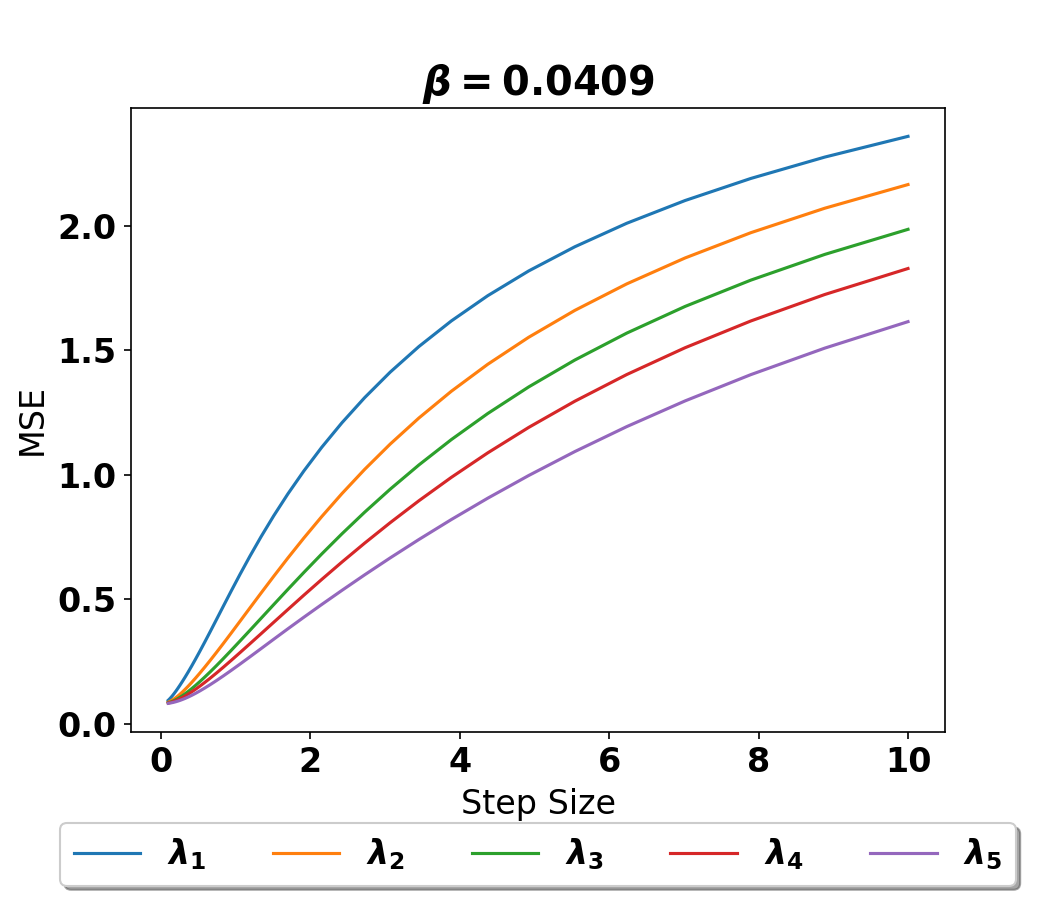}
    \includegraphics[width=0.24\linewidth]{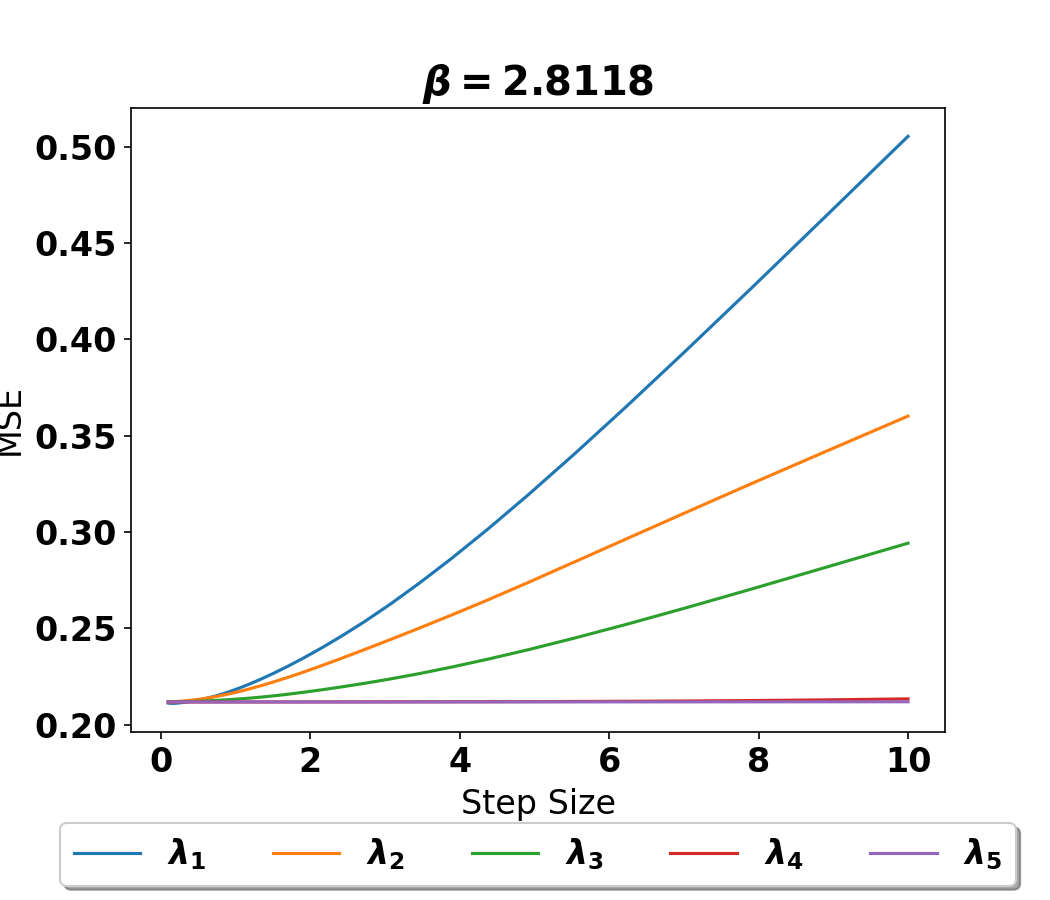}
    \includegraphics[width=0.24\linewidth]{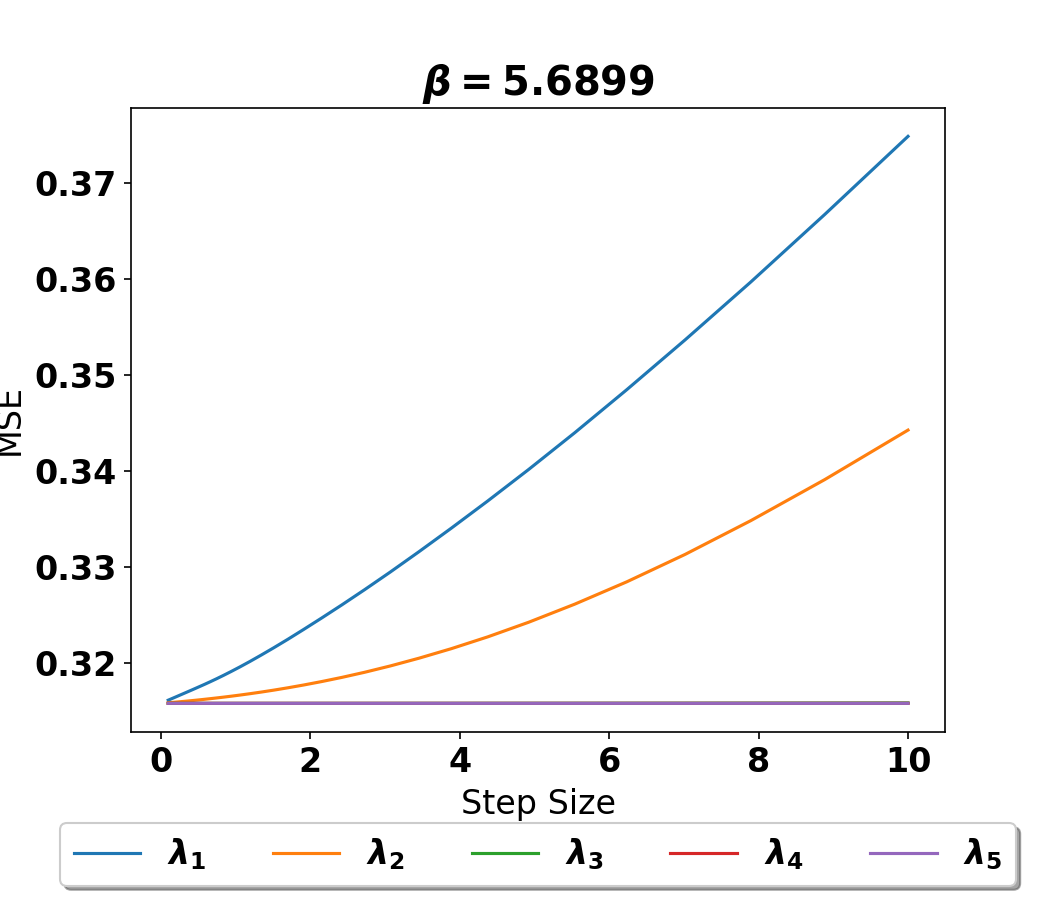}\\
    \small{(b) Robustness evaluation of $\beta-$VAE on FashionMNIST.}\\
    \includegraphics[width=0.24\linewidth]{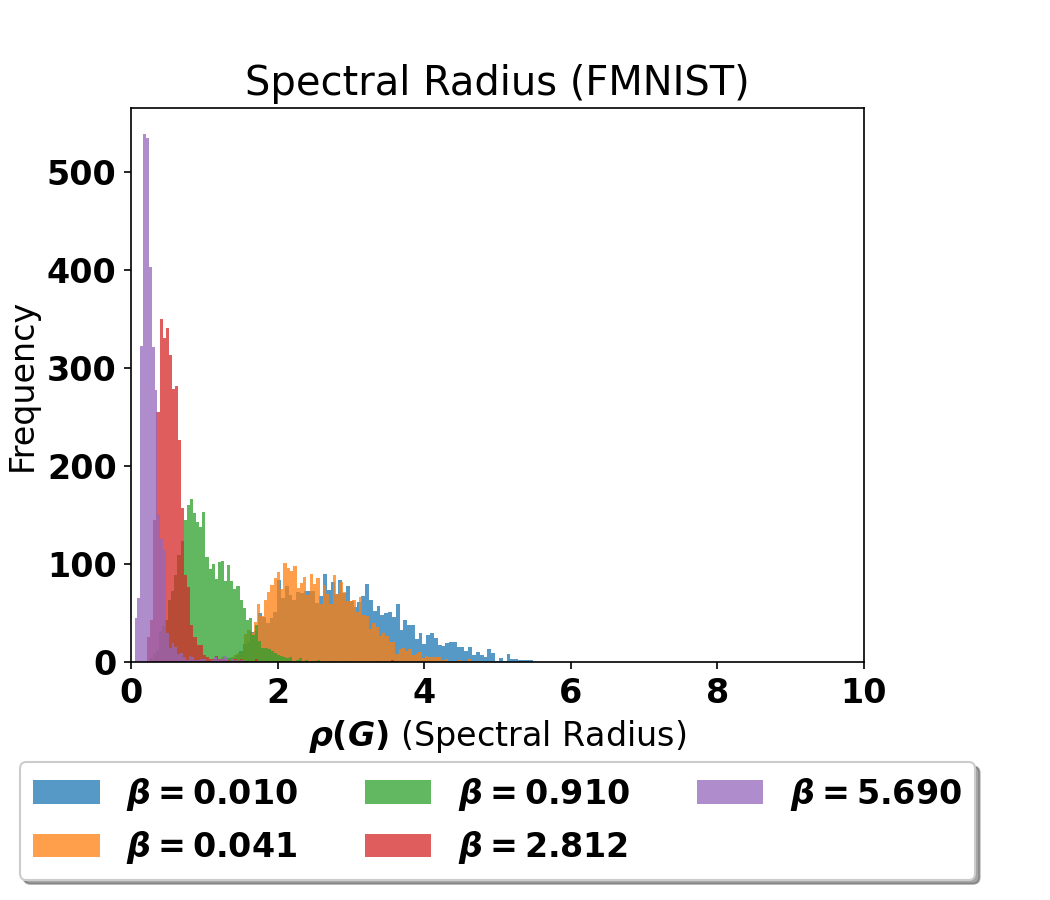}
    \includegraphics[width=0.24\linewidth]{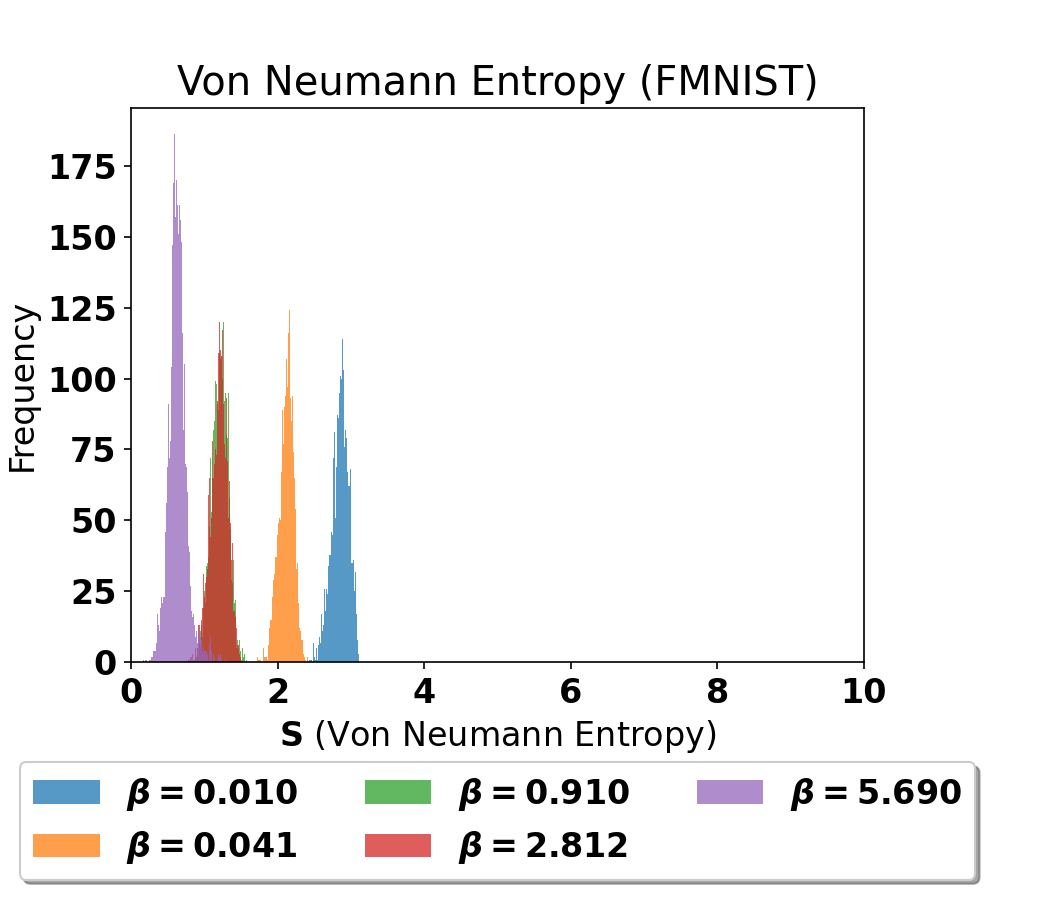}
    \includegraphics[width=0.24\linewidth]{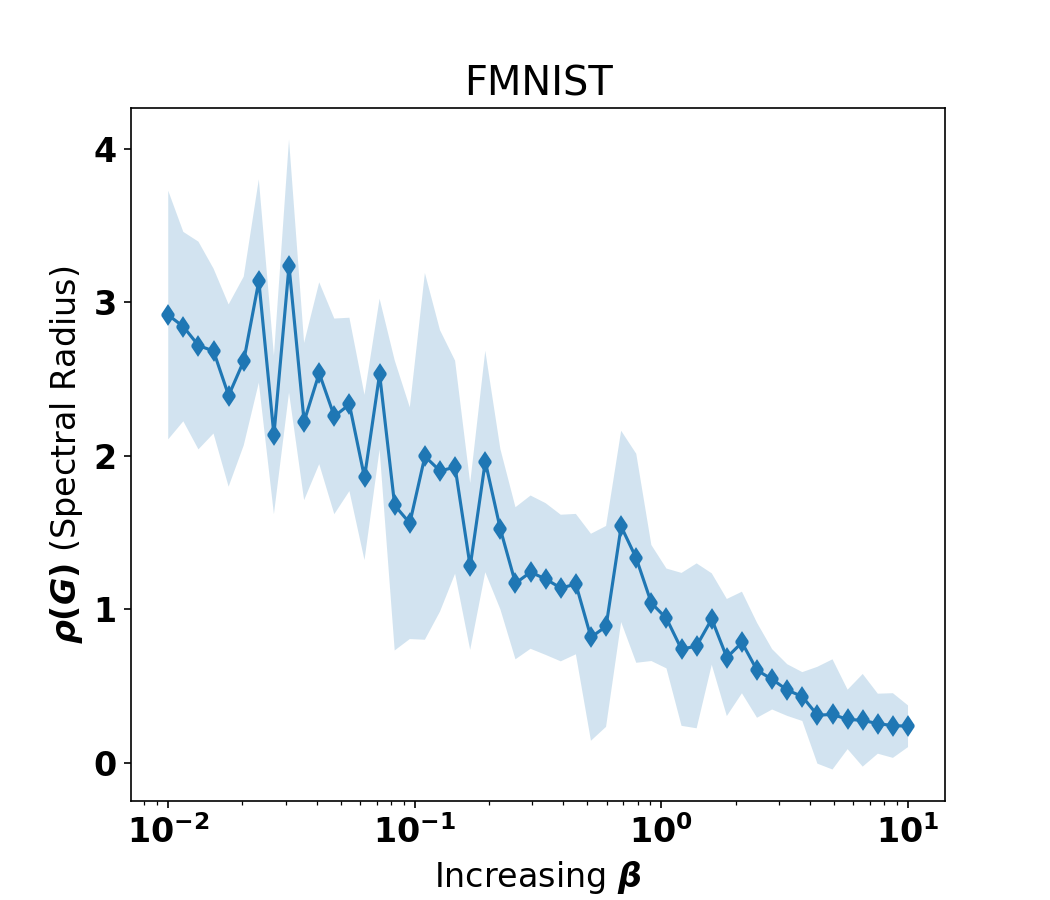}
    \includegraphics[width=0.24\linewidth]{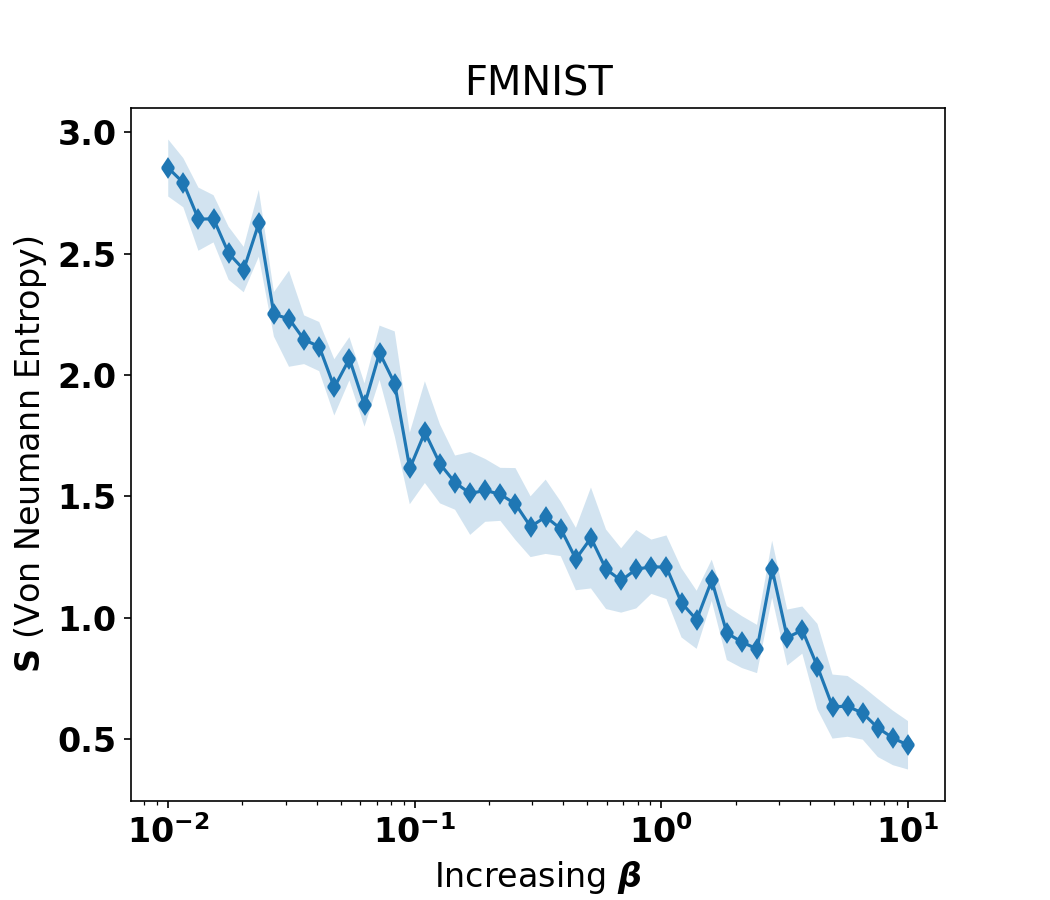}
    \includegraphics[width=0.24\linewidth]{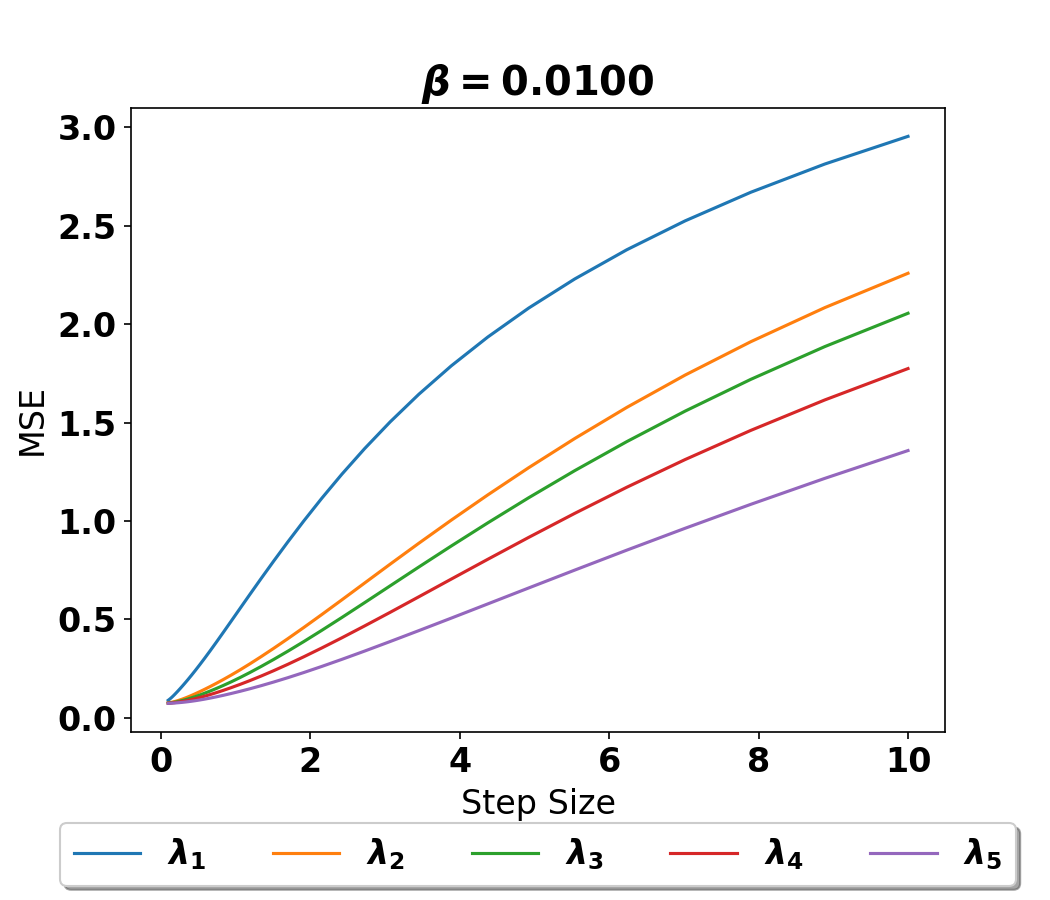}
    \includegraphics[width=0.24\linewidth]{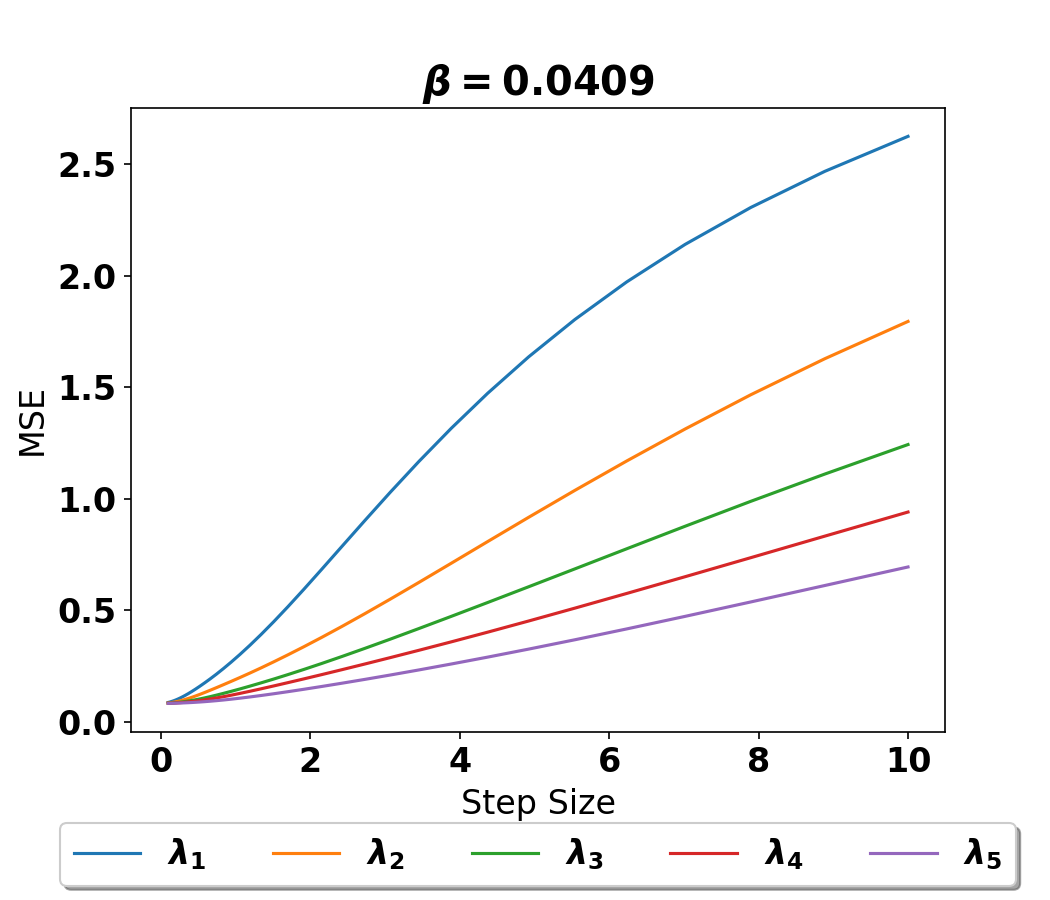}
    \includegraphics[width=0.24\linewidth]{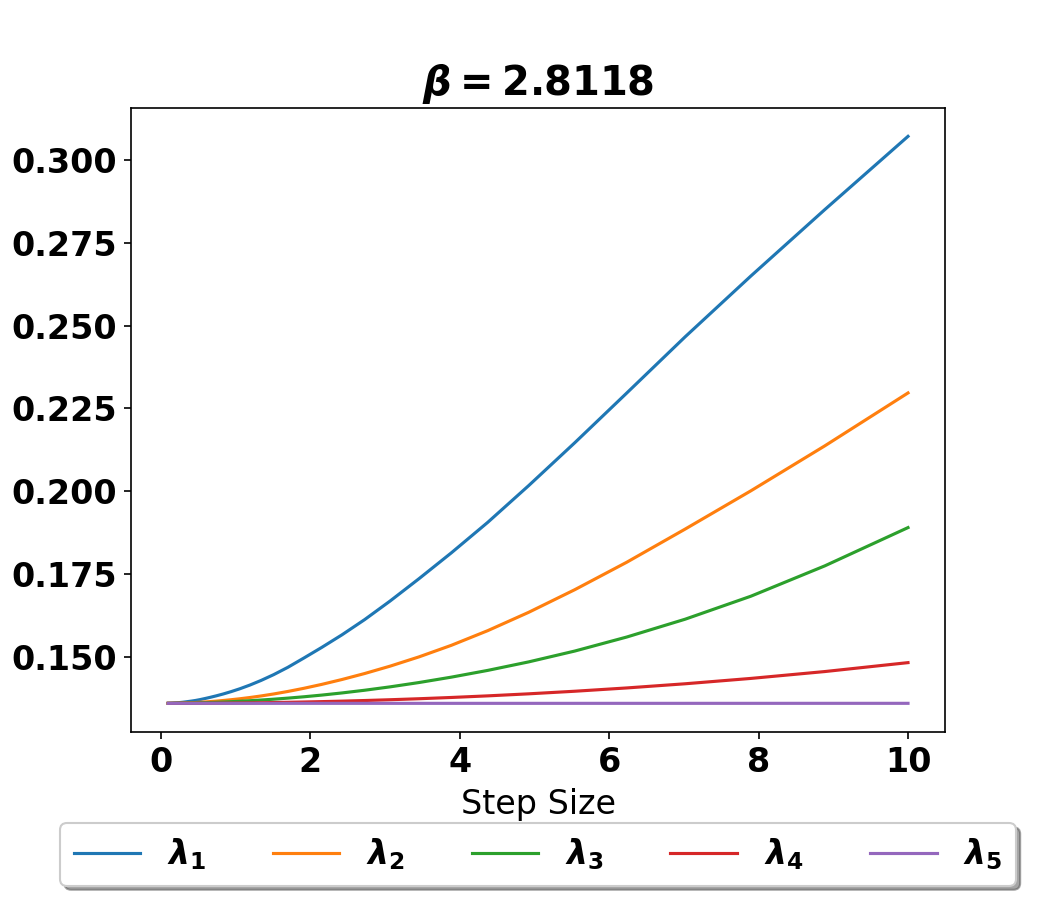}
    \includegraphics[width=0.24\linewidth]{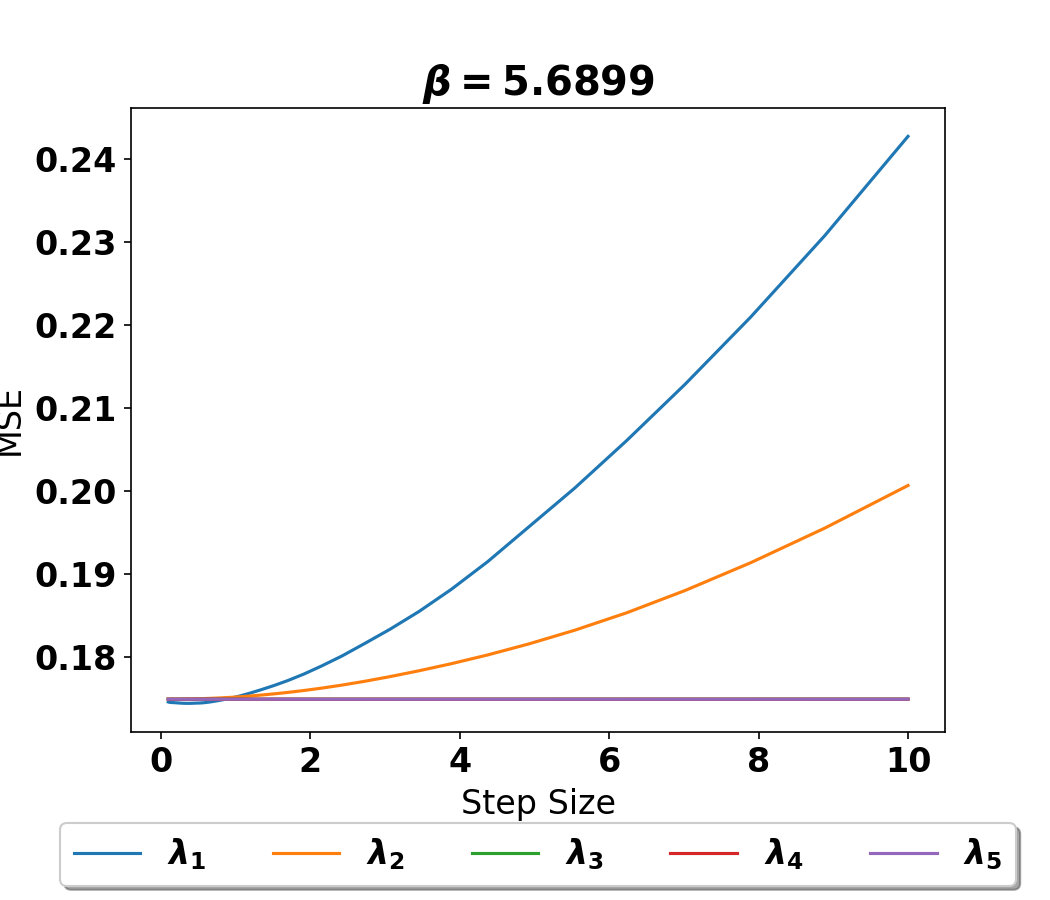}
    \caption{ Figure (a), on the left, we report the histogram of spectral radius and Von Neumann entropy (on test samples) for different values of $\beta$ in $\beta$-VAE. On the right, we report the average of two scores across test samples for an increasing value of $\beta$. We observe that increasing the value of $\beta$ suppresses the metric tensor's maximum eigenvalue, and the eigenspectrum distribution gets more isotropic. In the second row, we corrupt the test images along the top five eigendirections (denoted by $\lambda_1, \lambda_2, \lambda_3, \lambda_4, \text{ and } \lambda_5$) with an increasing step size for different values of $\beta$. The plots describe the average MSE across test samples. We observe that the average step size increases for a higher value of $\beta$. Increasing the value of $\beta$ reduces the \textit{posterior-prior gap}, minimising distortion in the latent space. Figure (b) demonstrates similar observations on the FashionMNIST dataset.}
    \label{fig:evaluation}
\end{figure*}

\subsection{Robustness evaluation}
A robustness method should suppress the maximum eigenvalue of the pullback metric tensor. Moreover, it should eliminate the directional bias resulting from the anisotropic distribution of eigenvalues of a pullback metric tensor. To quantify these two effects, we report the following two scores,

\textit{Spectral Radius} for a matrix $\GBRV$ is defined as, 
\begin{equation}
     \rho (\GBRV) = \max \{|\lambdaB|, \lambdaB \text{ is an eigenvalue of } \GBRV \}
\end{equation}
A robust model will have a low value of spectral radius. 

\textit{Von Neumann Entropy}~\citep{bengtsson2008geometry} $\SBRV$ of a metric tensor $\GBRV$ is given by the Shannon entropy of its eigenvalues $ \SBRV = - \sum_k \lambdaB_k \log \lambdaB_k$. 
The high value would imply the metric tensor is anisotropic, resulting in a directional bias. Thus, a robust model will have a low value of $\SBRV$.

\section{RESULTS AND DISCUSSION}
As pointed out in Remark (3.1), we can treat the latent space as (i) locally flat or (ii) a Riemannian manifold equipped with a metric tensor at all points. In the latter case, similar to the encoder, the stochastic pull-back metric tensor in the latent space can be computed as $\GBRV_{\zBRV}=  \JBRV_{g_{\omega}(\zBRV)}^T  \JBRV_{g_{\omega}(\zBRV)}$ where $g_{\theta}$ is a decoder network. The main paper presents case (i) results where latent space is $\GBRV_{\zBRV}=\IBRV$. We also analyze the case $\GBRV_{\zBRV}$ is a pullback induced by the decoder network; the results for this one are reported in Supplementary~\ref{apsec:extraresults}.
 
We used PyTorch~\citep{NEURIPS2019_9015} for the implementation of our work. The experimental details, including the training procedure, are discussed in Supplementary Section~\ref{implement}. The implementation is publicly available on GitHub\footnote{https://github.com/MdAsifKhan/RobustnessVAE/}. Here, we first empirically demonstrate the vulnerability of a VAE using a one-step attack along the dominant eigendirection of input samples. Next, we investigate the robustness of $\beta-$VAE and discuss a simple mixup strategy that fills a \textit{posterior-prior gap} in the latent space, flattens the latent space and ensures the decoder generated valid samples.

\subsection{Adversarial attack} Figure~\ref{fig:attack} demonstrates the two instances of corruption along the dominant eigenvector of $\beta=1$ VAE on MNIST~\citep{726791} and FashionMNIST~\citep{xiao2017fashion} datasets. For each dataset, the three columns in the first row are a set of original images and their corrupted version with a step size of $\delta=0.5223$ and $\delta=0.7443$. In the second row, we report their respective reconstructions. We observe that with $\delta=0.5223$, the reconstruction significantly differs from the original images, and for $\delta=0.7443$ gets much more severe, exposing the capacity of VAE. This result proves an attacker can exploit the metric tensor's directional bias to design a one-shot attack. Figure~\ref{fig:attackceleb} further demonstrates a similar finding on the CelebA~\citep{Liu_2015_ICCV} dataset.
\subsection{Robustness evaluation}
In this section, we first investigate the connection between our proposed scores and the robustness of $\beta-$VAE and later discuss an alternative robustness scheme using a mixup training loss. 
\subsubsection{$\beta-$VAE}
The $\beta$ parameter in a $\beta-$VAE controls the gap between an approximated posterior and a prior distribution. The high value of $\beta$ reduces the gap, thus eliminating the latent distortions which an adversary can exploit. We now investigate the effect of increasing values of $\beta$ on the two scores. We sample $50$ values of $\beta$ with a logarithmic spacing between $[0.01,10]$. We trained the encoder-decoder model for each parameter and computed the 
two scores $\rho (\hat{\GBRV)}$ and $\SBRV$ for every sample point.  Due to the high computational cost of training $50$ different models per dataset, in this section, we limit the experiments to MNIST and FashionMNIST datasets.

Figure~\ref{fig:evaluation} first and the second column in row one (MNIST) and in row three (FashionMNIST) reports the histogram of the scores for four different values of $\beta$. We observe that the higher value of $\beta$ suppresses the spectral radius. Similarly, the von Neumann entropy is decreased, demonstrating that the local directions get isotropic. Importantly, this indicates the adversary cannot exploit the directional bias for high values of $\beta$ with $\etaB$ small in the norm. In the third and fourth columns of rows one and three, we report the mean and standard deviation of the scores computed for fifty increasing values of $\beta$. The results demonstrate that by reducing the KL gap, parameter $\beta$ prevents distortion in the latent space eliminating the directional bias exploited by an adversary. 

Next, we examine the connection between the step size $\delta$ and the strength of attacks under different values of $\beta$. We report the mean squared error (MSE) between an original image and its reconstruction under varying corruption rates along five dominant eigendirections. We generated $40$ logarithmic spacing steps in the range $[0.01, 10]$. Figure~\ref{fig:evaluation} second row (MNIST) and fourth row (FashionMNIST) demonstrate the MSE vs step size averaged across all test samples for four different values of $\beta$. We observe that for the small $\beta$, all five directions tend to get high MSE, and as $\beta$ increases, it requires a larger step size to get a significant change in MSE. 

Further, to analyse the latent distortions, we generate perturbations of increasing magnitude for each sample $\xBRV$ by increasing step size $\delta$ along the dominant eigendirection of the pullback metric tensor. In Figure~\ref{fig:latent_dist}, we report the distance between the latent encoding of the original input and its perturbations averaged across data samples.
\begin{figure*}[t]
    \centering
    \small{Robustness evaluation of \textit{mixup} on MNIST and FashionMNIST.}\\
    \small{(a) MNIST \qquad \qquad \qquad \qquad \qquad \qquad \qquad \qquad \qquad (b) FashionMNIST}\\
\includegraphics[width=0.24\linewidth]{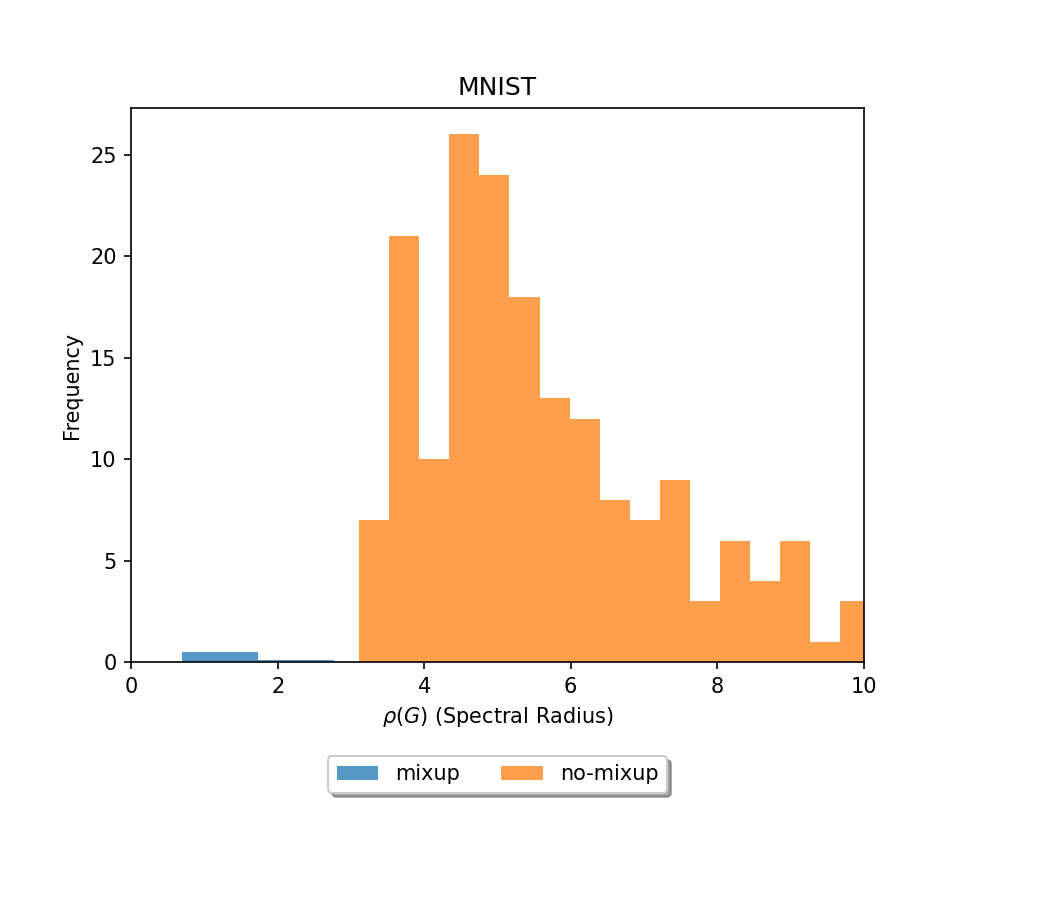}
    \includegraphics[width=0.24\linewidth]{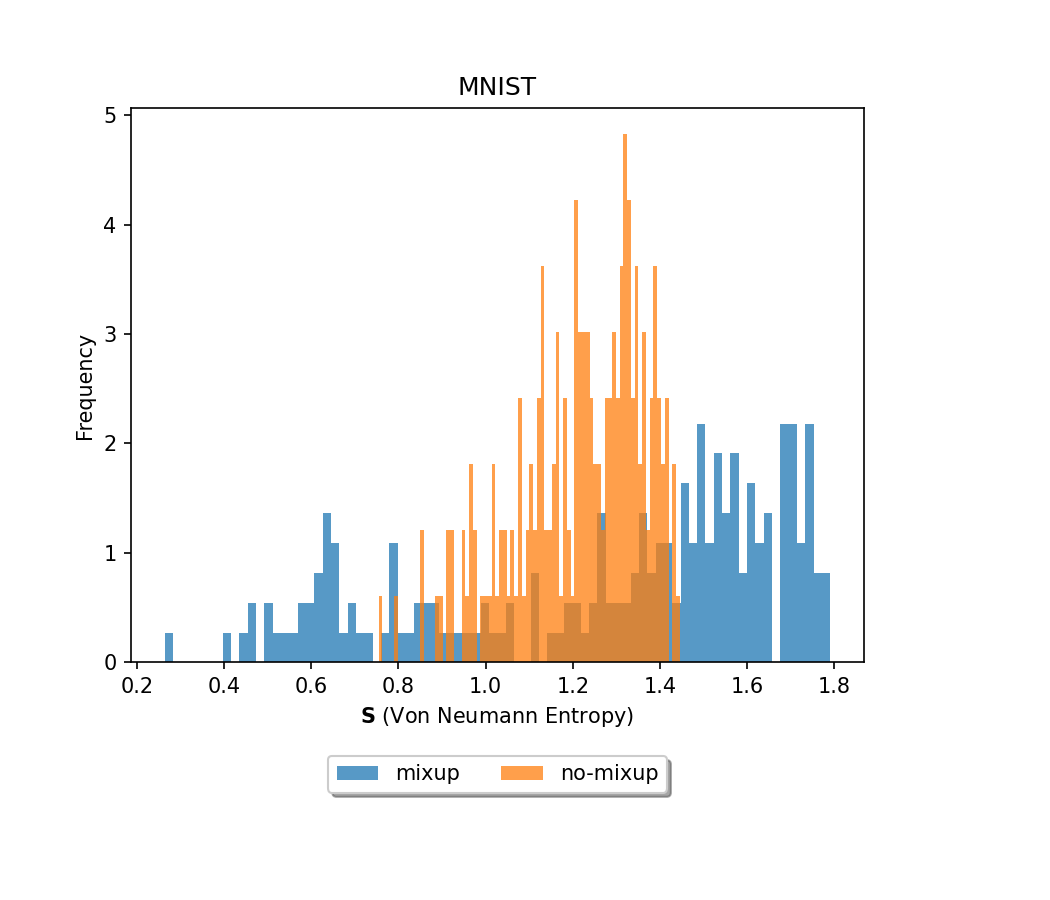}
    \includegraphics[width=0.24\linewidth]{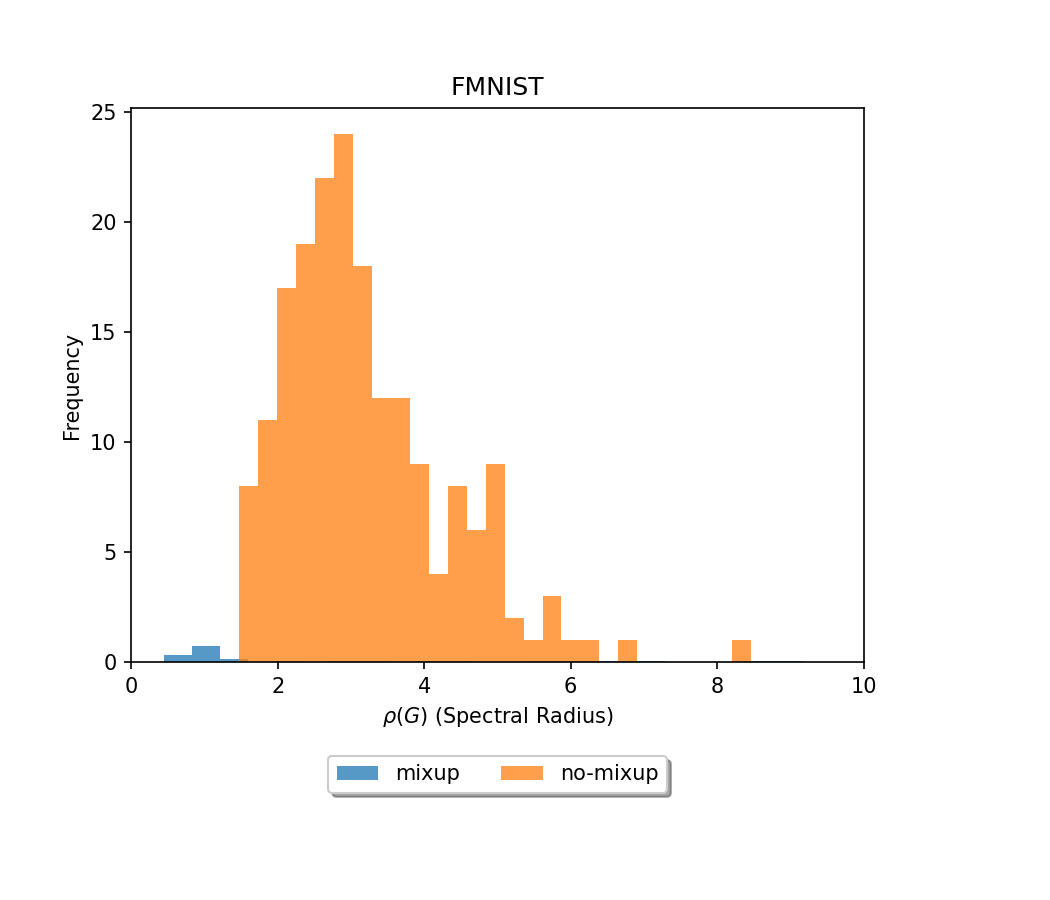}
    \includegraphics[width=0.24\linewidth]{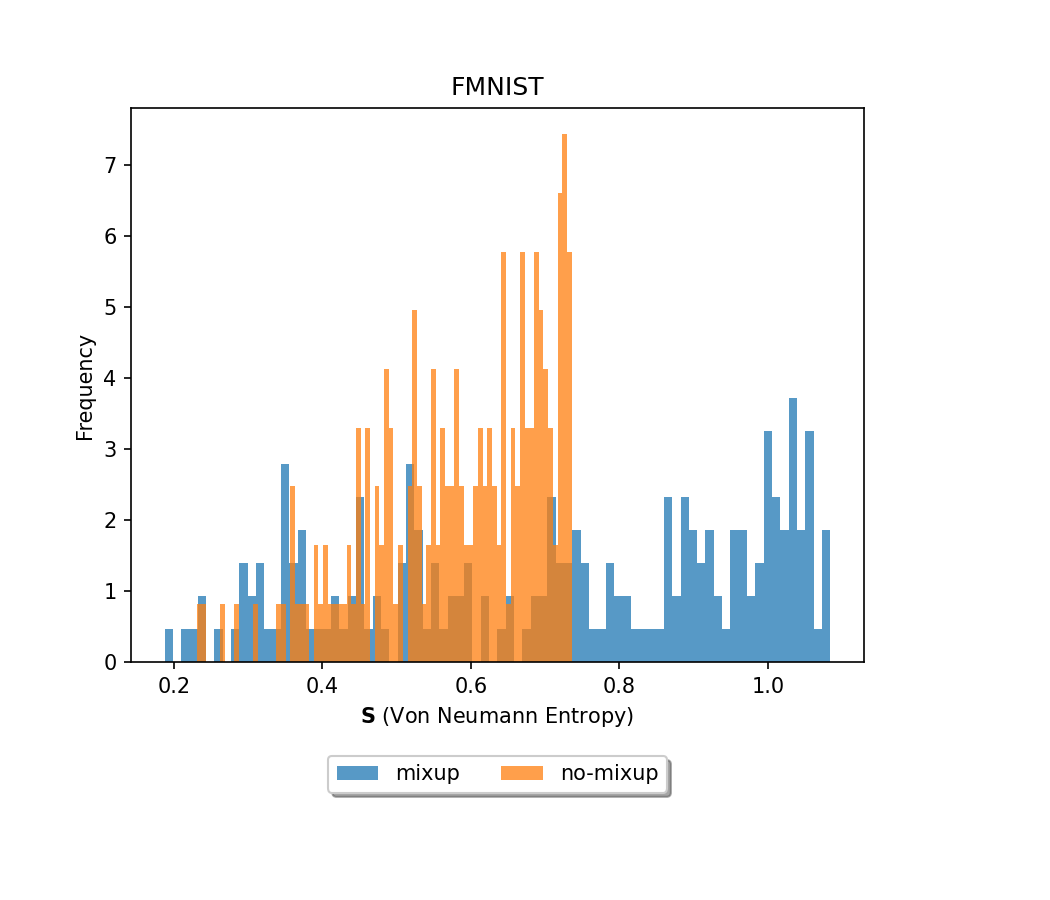}
\includegraphics[width=0.24\linewidth]{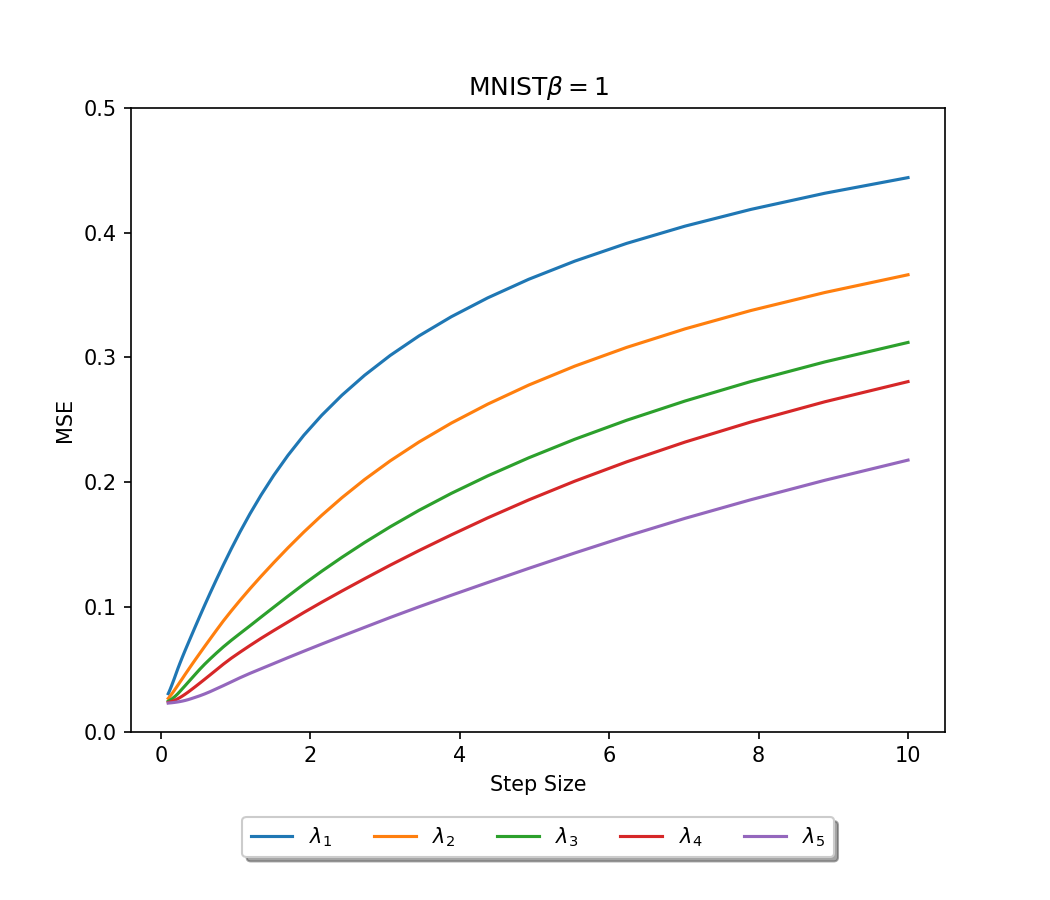}
\includegraphics[width=0.24\linewidth]{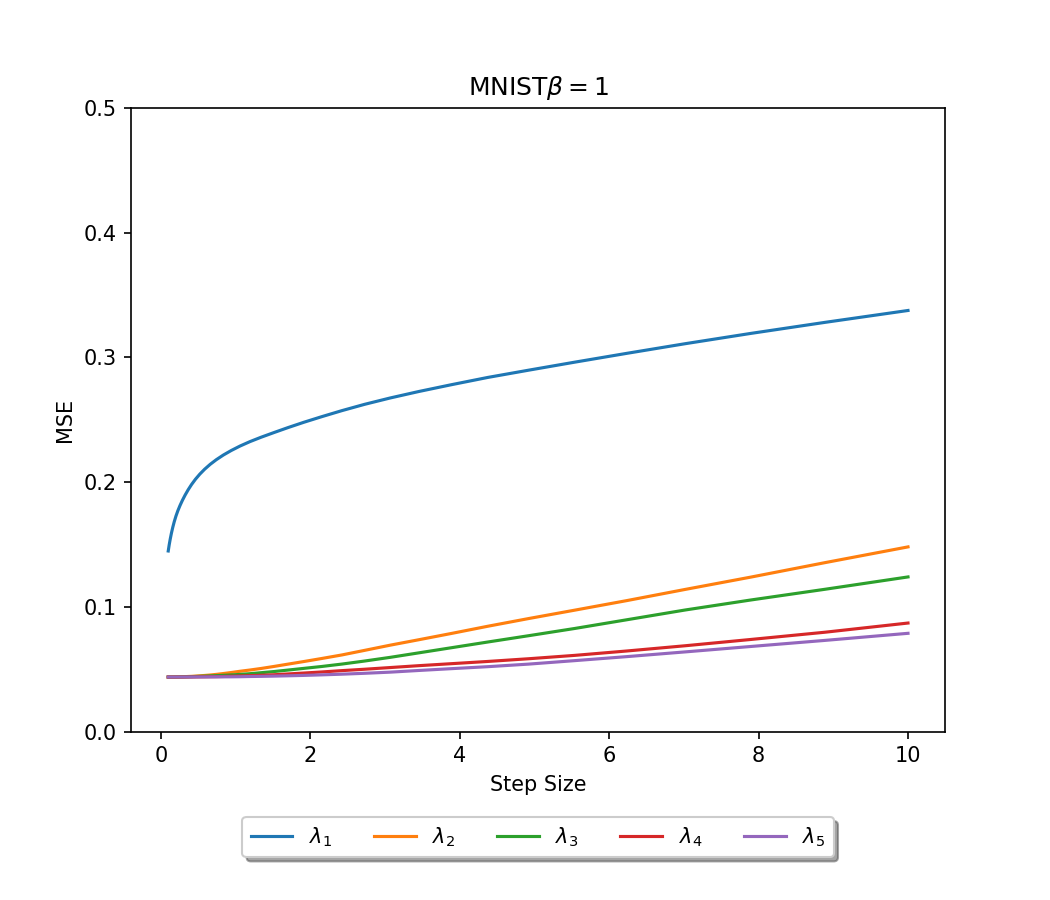}
\includegraphics[width=0.24\linewidth]{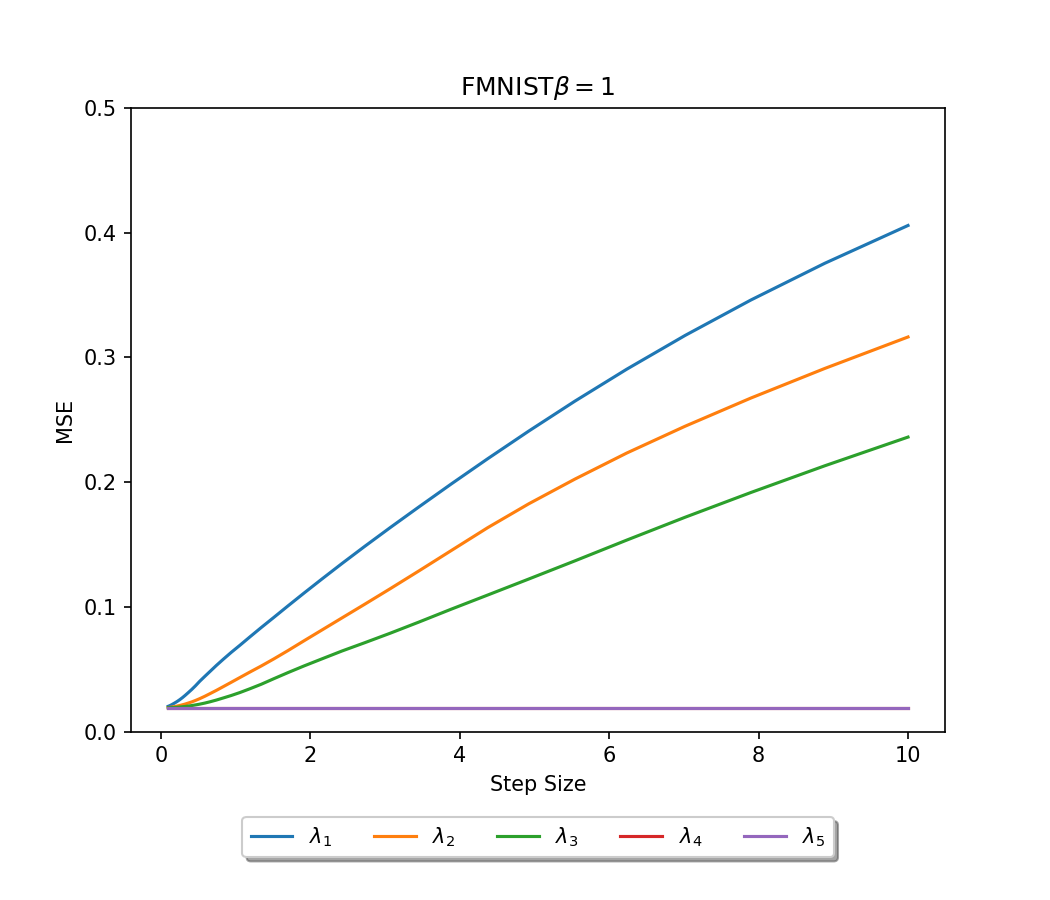}
\includegraphics[width=0.24\linewidth]{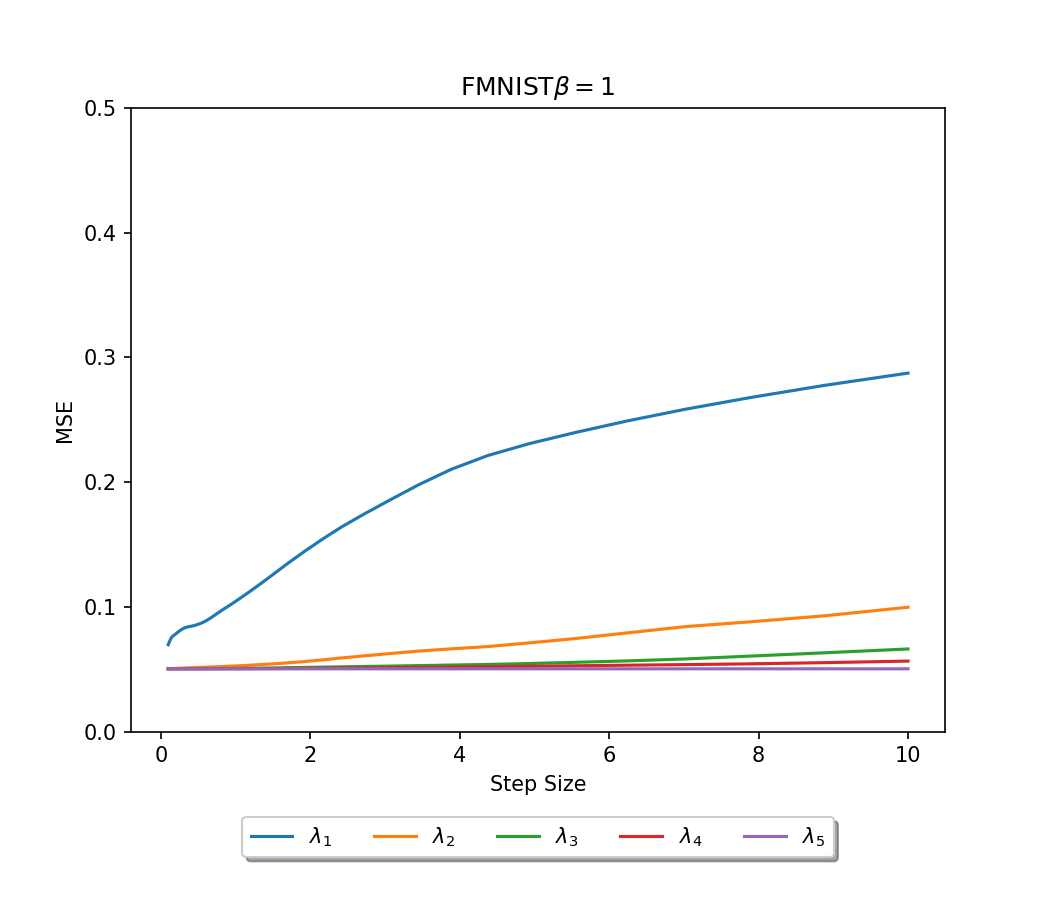}
    \caption{ Figure (a), first row, two columns are histograms of spectral radius and Von Neumann entropy (on test samples) with and without mixup regularisation. Second row, we corrupt the test images along the top five eigendirections (denoted by $\lambda_1, \lambda_2, \lambda_3, \lambda_4, \text{ and } \lambda_5$) with an increasing step size for different values of $\beta$. The plots describe the average MSE across test samples. We observe mixup suppresses the spectral radius, and the eigenspectrum distribution gets more isotropic. Figure (b) demonstrates similar observations on the FashionMNIST dataset.}
    \label{fig:evaluationmixup}
\end{figure*}

Increasing the $\beta$ improves the contribution of the uncertainty term, showcasing that the probabilistic encoder-decoder model is more robust than its deterministic counterpart. The stochastic metric tensor comprises two terms resulting from the latent space's variational distribution. For small values of $\beta$, the encoder does not do well in quantifying the uncertainty and fails to match the prior; as an outcome, the latent space is more distorted, resulting in empty and low-density regions given by dominating eigendirections of the metric tensor. The distortions are reduced for a high value of $\beta$; accordingly, the second term can account for the uncertainty in latent space preventing the eigenvalues from getting large.

In Supplementary Section~\ref{apsec:extraresults}, we report the results for the case where latent space is equipped with a metric tensor $\GBRV_{\zBRV}$ induced by the decoder network. We observe a higher value of spectral radius when incorporating the geometry of the decoder, which implies the latent space is locally curved. 

\subsubsection{Latent \textit{mixup}}
In $\beta-$VAE, a higher value of $\beta$ reduces the \textit{posterior-prior gap} but degrades the reconstruction quality. We propose filling up the low or zero-density region in the latent space as an alternative solution to the latent distortion problem. To this end, we utilise a \textit{mixup} training strategy. Mixup is a simple and powerful data augmentation technique that has been shown to improve the robustness and generalisation capabilities of classification models~\citep{zhang2017mixup, lamb2019interpolated, verma2019manifold}. For a given pair of distinct data points $\xBRV_i$ and $\xBRV_j$, we use linear interpolation between them to draw samples in the low-density region; likewise, we take the linear interpolation between the encodings $\zBRV_i$ and $\zBRV_j$, of the respective samples,
\begin{align}
    \zBRV_{m} = \alpha \zBRV_i + (1-\alpha) \zBRV_j, \quad 
    \xBRV_{m} = \alpha \xBRV_i + (1-\alpha) \xBRV_j
\end{align}
where $\alpha$ is sampled from a Beta distribution $\gB(a,b)$ with shape parameters $a$ and $b$ that we set to $0.5$. We now introduce the following regularisation penalty to a VAE objective,
\begin{align}
    C = ||\zBRV_m - f_\phi(\xBRV_m)||_2 + ||g_\theta(\zBRV_m) - \xBRV_m||_2
\end{align}

where the first term is to force the encoder to match mixing in the input space, and the second term is to force the decoder to match mixing in the latent space to mixing in the reconstruction space. Combining the two loss terms lets the encoder fill the empty region of latent space such that the linear interpolation in a latent space corresponds to linear interpolation in input space. Thus reducing the gap between the posterior and a prior and preventing the decoder from generating unconstrained output. We report the robustness scores for increasing step size $\delta$.

Figure~\ref{fig:evaluationmixup} shows the comparison of robustness scores with mixup training on MNIST and FashionMNIST datasets. We observe mixup suppresses the spectral radius for two datasets, thus improving the robustness. Figure~\ref{fig:evaluationmixup2} further demonstrates the qualitative performance of VAE trained with a \textit{mixup} loss.

\begin{figure*}[h]
\small{\textbf{FashionMNIST}: (a) Original \qquad \qquad (b) Reconstruction  \qquad \qquad \qquad \textbf{MNIST}: (c) Original \qquad \qquad (d) Reconstruction} \qquad
\begin{multicols}{4}
\centering
 \includegraphics[height=3cm, width=2.75cm, trim={0cm 2,1cm 0cm 0cm},clip]{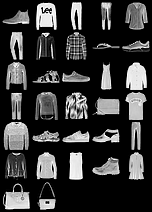}\par
 \includegraphics[height=3cm, width=2.75cm,trim={0cm 2,1cm 0cm 0cm},clip]{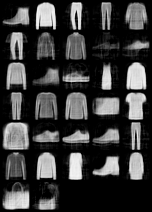}\par
\includegraphics[height=3cm, width=2.75cm,trim={0cm 2,1cm 0cm 0cm},clip]{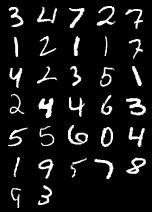}\par
\includegraphics[height=3cm, width=2.75cm,trim={0cm 2,1cm 0cm 0cm},clip]{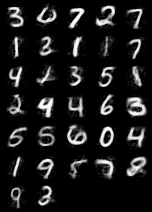}
\par 
\end{multicols}
    \caption{Here, we report a qualitative evaluation of training VAE with a mixup loss. We report the input samples and their respective reconstructions. On the left are the results of the FashionMNIST and the rights of MNIST.}
    \label{fig:evaluationmixup2}
\end{figure*}

\section{CONCLUSION, LIMITATIONS AND FUTURE SCOPE}
We have presented a geometrical perspective of adversarial attacks and introduced scores for measuring the robustness of VAEs. We have shown that the sensitivity of the encoder at a given input depends on eigendirections of stochastic pullback metric tensor that an adversary can exploit to design an attack. We proposed evaluation scores using the spectral radius and Von Neumann's entropy of a pullback metric tensor. Moreover, we demonstrate the scores correlate with parameter $\beta$ of $\beta-$VAE, providing geometrical insights into the robustness due to parameter $\beta$ of $\beta-$VAE. 

A caveat with $\beta-$VAE is that increasing the $\beta$ trades off the capacity of representations with the quality of reconstruction resulting in over-smooth reconstructions for a large value of $\beta$, which further implies a tradeoff between the spectral radius and robustness. To circumvent the above issue, we utilised a \textit{mixup} training scheme that fills the empty region of latent space and improves the robustness measured in terms of proposed scores. We want to remark \textit{mixup} does not guarantee that all empty or low-density regions are filled in the latent space. Here coverage depends on the regularisation hyperparameter as well as the size of the training dataset. Thus, it is still possible the regions of zero-low density exist. However, the overall training scheme reduces latent distortions. 

Recently few mechanisms have been proposed to reduce the distortion in latent space and improve the robustness of VAEs~\citep{willetts2019improving, kuzina2021diagnosing}. We hypothesise that the benefits of such robustness measures can be better established geometrically by investigating their pullback metric tensors. Another recent work~\citep{kuzinaalleviating} shows the adversarial attacks on VAEs target to move latent encoding to the region of low or zero density. Moreover, they proposed using a Markov Chain Monte Carlo (MCMC) as a correction scheme during the inference time. We wish to investigate the applicability of the proposed scores to the above methods in future work.

A limitation of our current work is that we only consider the unsupervised attack when a target sample is unknown. Moreover, there can be different forms of attack by replacing $l_2-$norm with more general $p-$norms. We wish to study these in future work.

\subsubsection*{Acknowledgements}
This work was partly supported by an unconditional gift from Huawei Noah's Ark Lab, London.

\bibliography{aistats2023}

\appendix
\onecolumn
\aistatstitle{Supplementary Materials}

\section{IMPLEMENTATION DETAILS}
\label{implement}
We use PyTorch~\cite{NEURIPS2019_9015} to implement our work, and all networks are trained on a single 11GB Nvidia RTX 2080 GPU.
For MNIST and FashionMNIST datasets,  we use the same architecture across all the experiments. The encoder network is a four-layer multi-layered perceptron (MLP) with $256$, $256$, $512$ and $32$ hidden units. The latent space distribution is a multivariate Gaussian with mean and standard deviation parameterised by two $32\times 32$ linear mappings. We use the standard zero mean and unit covariance prior on the latent space. The decoder network is the inverse of an encoder with $32$, $512$, $256$ and $256$ hidden units. We use \texttt{tanh} as an activation function and batch-normalisation~\cite{ioffe2015batch} before all activations. 

For the CelebA dataset, the encoder is a convolutional neural network (CNN) with four convolution layers with an increasing number of filters $32$, $64$, $128$, $256$, and $512$ followed by a dense layer that maps to a latent space. The latent distribution is a multivariate Gaussian with mean and standard deviation given by $128\times 128$ linear mappings. The decoder is an inverse of encoder architecture with a dense layer that maps to $1024$ hidden units followed by transpose convolution layers with decreasing number of filters $512$, $256$, $128$, $64$ and $32$. All convolution layers are followed with a \texttt{tanh} activation function and batch-normalisation layer. For all models, we use Adam optimiser~\cite{kingma2014adam} with a learning rate of $0.003$.

\section{EXTENDED RESULTS}
\label{apsec:extraresults}
We also conducted experiments for the case when latent space is locally curved that is $\GBRV_{\zBRV}=\JBRV_{g_{\phi}(\zBRV)}^T \JBRV_{g_{\phi}(\zBRV)}$ where $g_{\phi}$ is a decoder network with parameters $\phi$. For a stochastic decoder mapping $g_{\phi} = \muB(\zBRV) + \epsilonB \odot \sigmaBRV_{\phi} (\zBRV)$ the metric tensor $\GBRV_{\zBRV}$ can further be expressed as $\GBRV_{\zBRV}=\JBRV_{\muB_{\phi}(\zBRV)}^T\JBRV_{\muB_{\phi}(\zBRV)}  + \JBRV_{\sigmaB_{\phi}(\zBRV)}^T\JBRV_{\sigmaB_{\phi}(\zBRV)}$. We use this expression of pullback metric tensor in the latent space to measure infinitesimal distance which results in the following expression of combined pullback metric tensor,
\begin{equation}
\label{eq:pullnew}
     \hat{\GBRV}_{\xBRV} =  \JBRV_{\muB_{\theta}(\xBRV)}^T \GBRV_{\zBRV} \JBRV_{\muB_{\theta}(\xBRV)}  + \JBRV_{\sigmaB_{\theta}(\xBRV)}^T \GBRV_{\zBRV} \JBRV_{\sigmaB_{\theta}(\xBRV)}
\end{equation}

Figure~\ref{fig:attackdec}-\ref{fig:attackdec2} show the results on MNIST and FashionMNIST. Figure~\ref{fig:evaluationdec} further discusses the analysis of robustness scores.

\begin{figure*}[h]
\small{\textbf{MNIST}: (a) Original  $\downarrow$ Reconstruction  \qquad(b) Corrupted $\delta_1$ $\downarrow$ Reconstruction \qquad(c) Corrupted $\delta_2$ $\downarrow$ Reconstruction}
\begin{multicols}{3}
 \includegraphics[height=4cm, width=5.25cm]{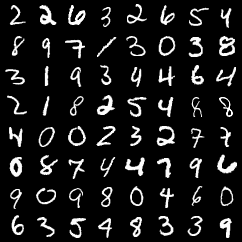}
 \includegraphics[height=4cm, width=5.25cm]{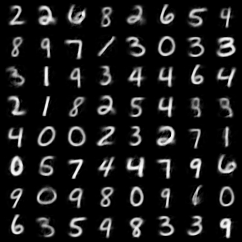}\par
\includegraphics[height=4cm, width=5.25cm]{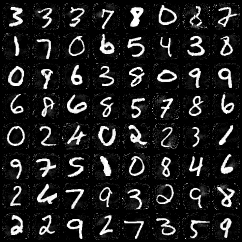}  
\includegraphics[height=4cm, width=5.25cm]{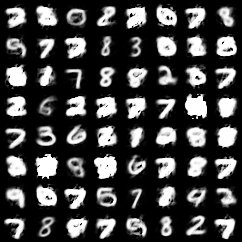}
\par
\includegraphics[height=4cm, width=5.25cm]{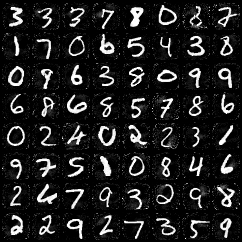}
\includegraphics[height=4cm, width=5.25cm]{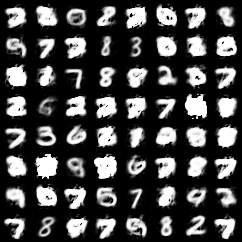} 
\end{multicols}
    \caption{Illustration of adversarial attack along the dominant eigenvector of a stochastic pullback metric tensor given by Equation~\ref{eq:pullnew}. We evaluate the reconstruction for original images and its two corrupted versions with different step sizes $\delta_1=0.5233$ and $\delta_2=0.7443$.}
    \label{fig:attackdec}
\end{figure*}

\begin{figure*}[h]
\small{\textbf{FMNIST}: (a) Original  $\downarrow$ Reconstruction  \qquad(b) Corrupted $\delta_1$ $\downarrow$ Reconstruction \qquad(c) Corrupted $\delta_2$ $\downarrow$ Reconstruction}
\begin{multicols}{3}
 \includegraphics[height=4cm, width=5.25cm]{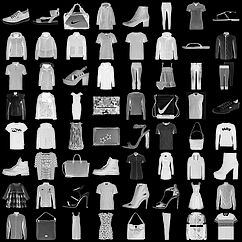}
 \includegraphics[height=4cm, width=5.25cm]{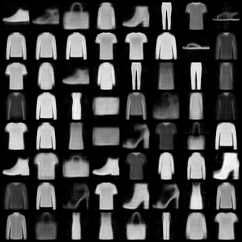}\par
\includegraphics[height=4cm, width=5.25cm]{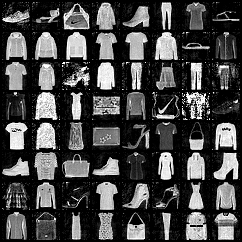}  
\includegraphics[height=4cm, width=5.25cm]{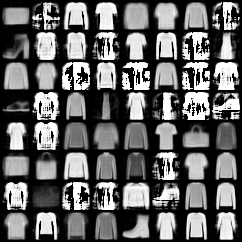}
\par
\includegraphics[height=4cm, width=5.25cm]{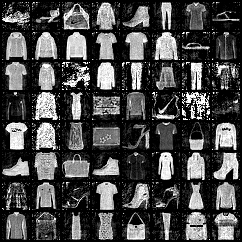}
\includegraphics[height=4cm, width=5.25cm]{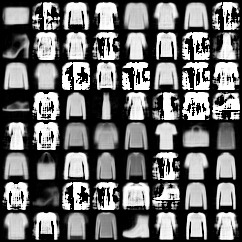} 
\end{multicols}
    \caption{Illustration of adversarial attack along the dominant eigenvector of a stochastic pullback metric tensor given by Equation~\ref{eq:pullnew}. We evaluate the reconstruction for original images and its two corrupted versions with different step sizes $\delta_1=0.5233$ and $\delta_2=0.7443$.}
    \label{fig:attackdec2}
\end{figure*}

\begin{figure*}[ht!]
    \centering
    \small{(a) Robustness evaluation of $\beta-$VAE on MNIST.}\\
    \includegraphics[width=0.24\linewidth]{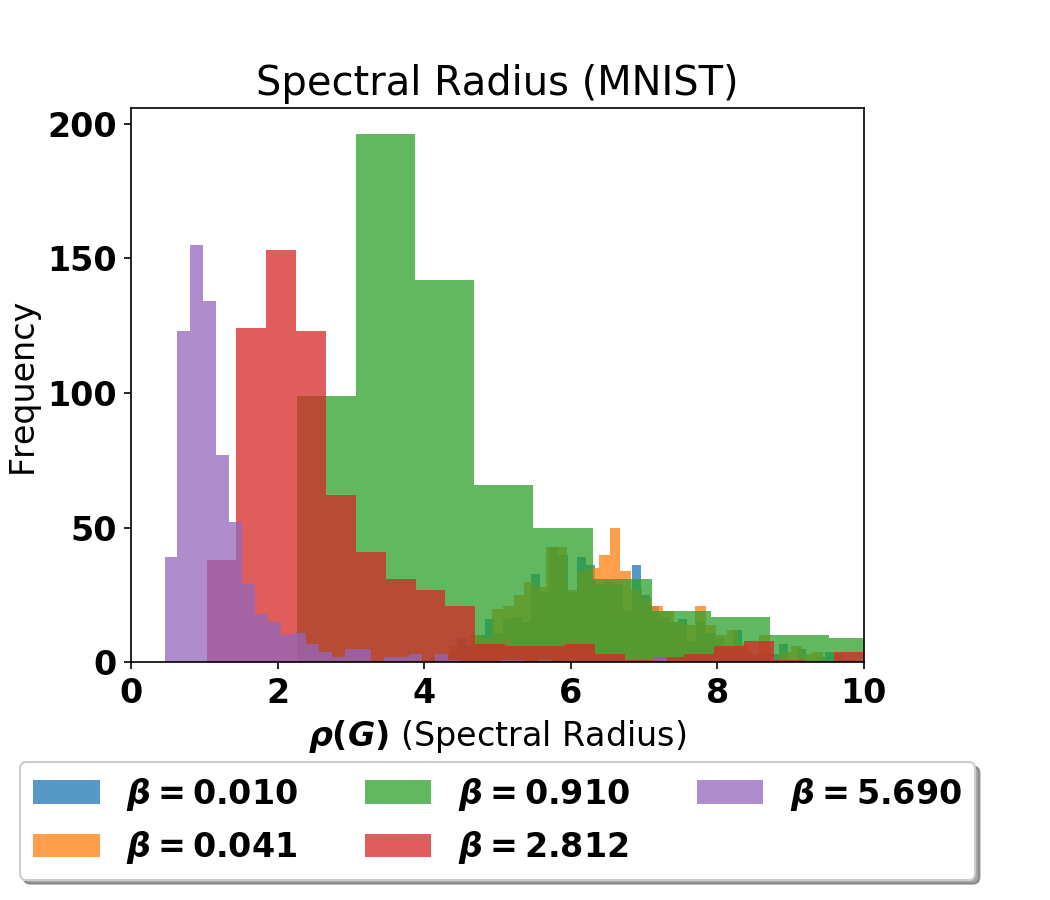}
    \includegraphics[width=0.24\linewidth]{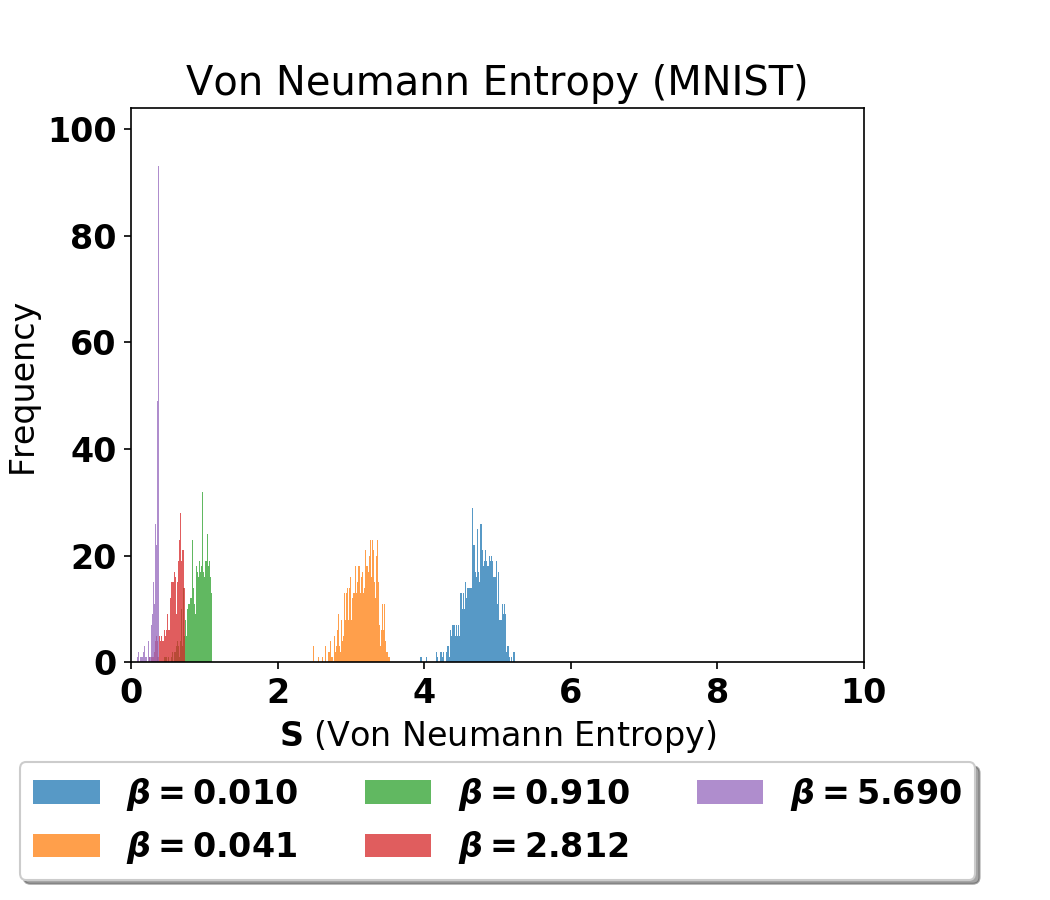}
    \includegraphics[width=0.24\linewidth]{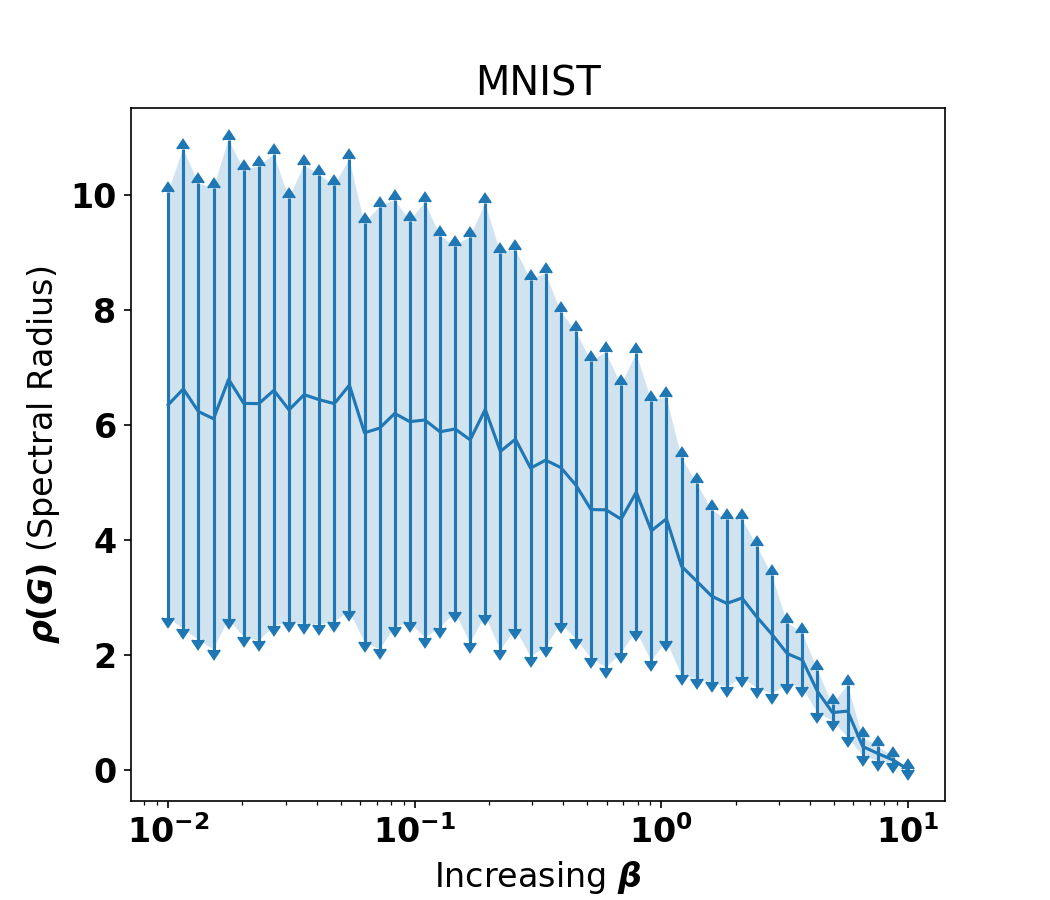}
    \includegraphics[width=0.24\linewidth]{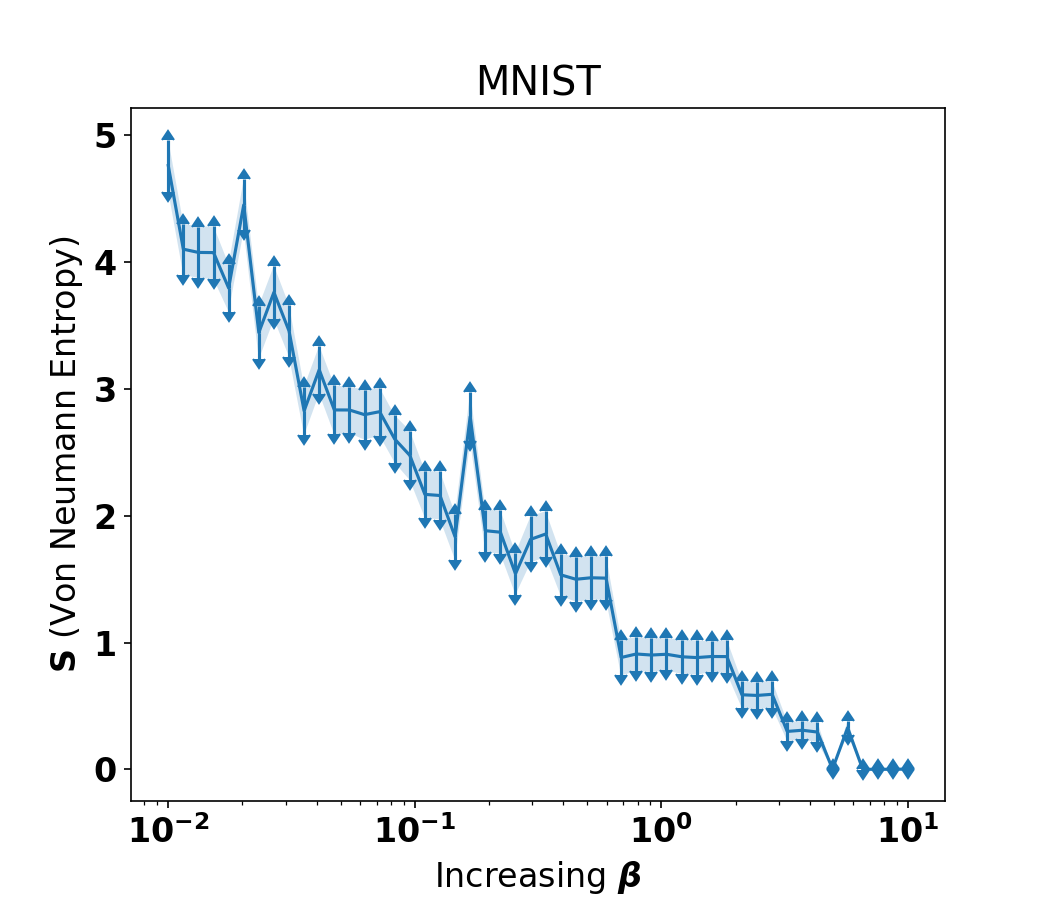}
    \includegraphics[width=0.24\linewidth]{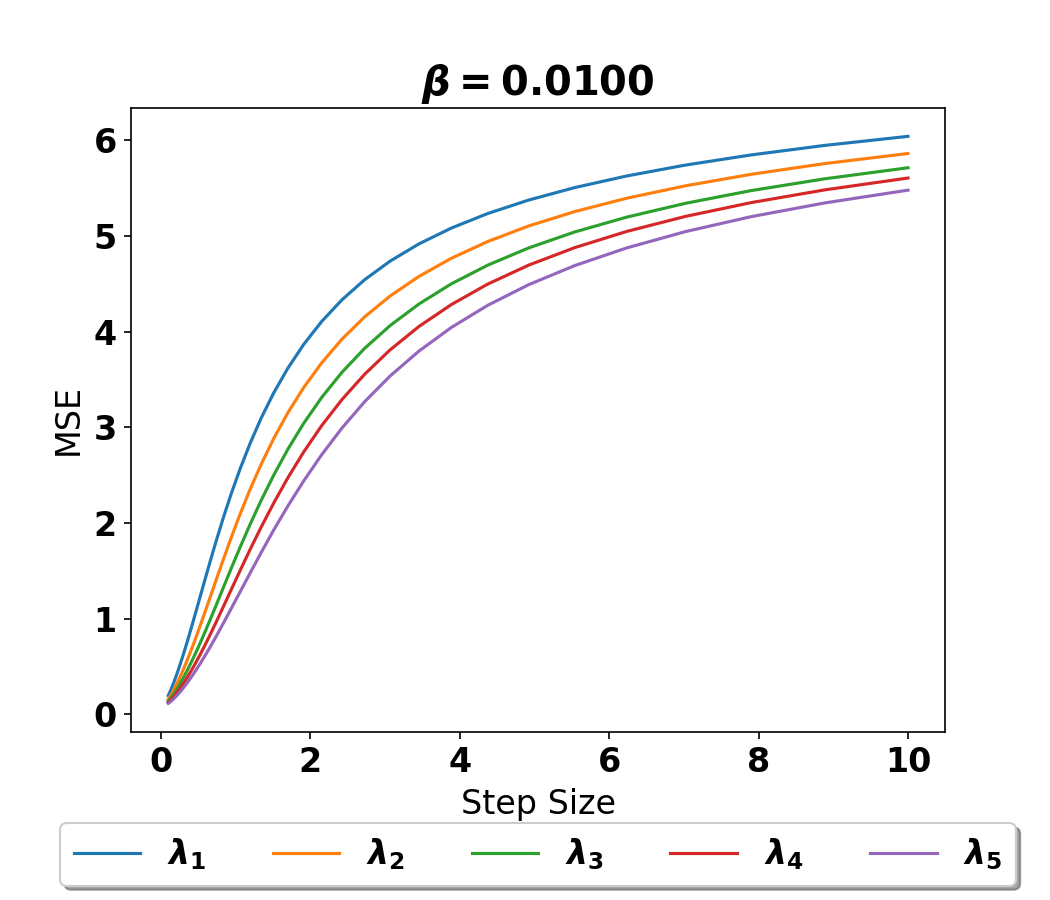}
    \includegraphics[width=0.24\linewidth]{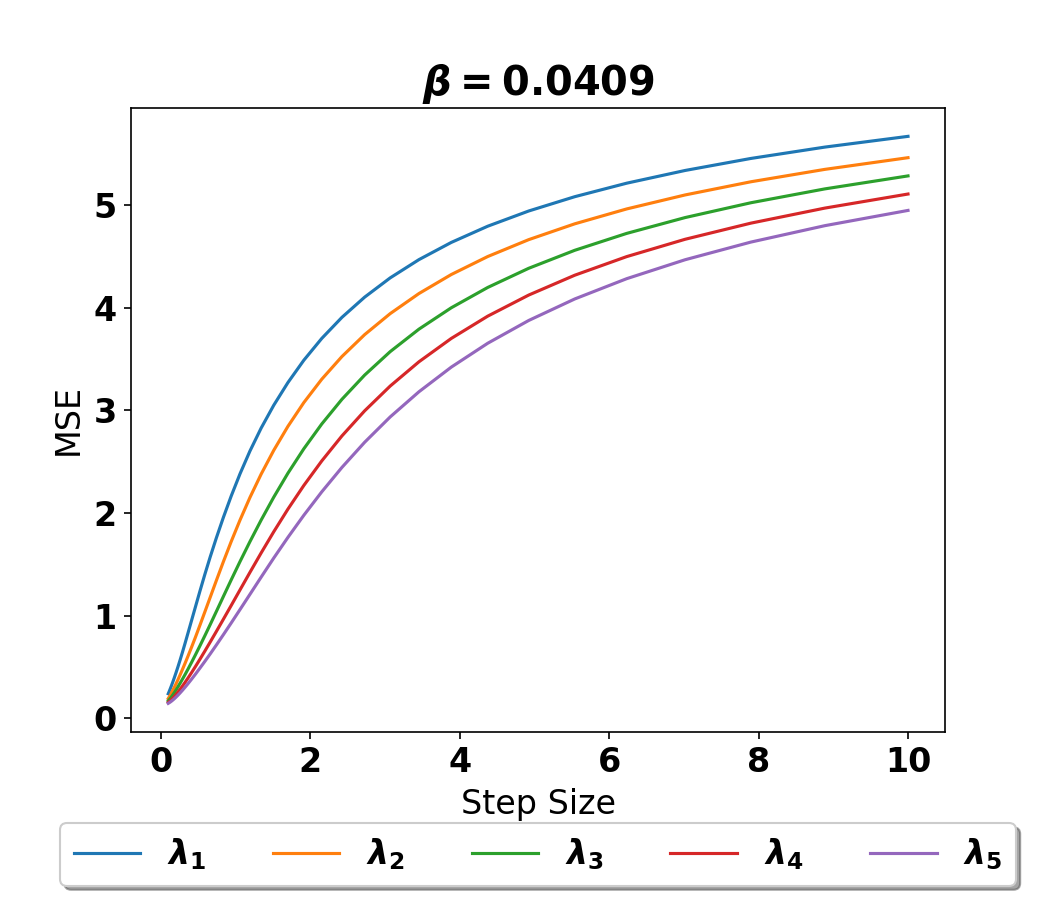}
    \includegraphics[width=0.24\linewidth]{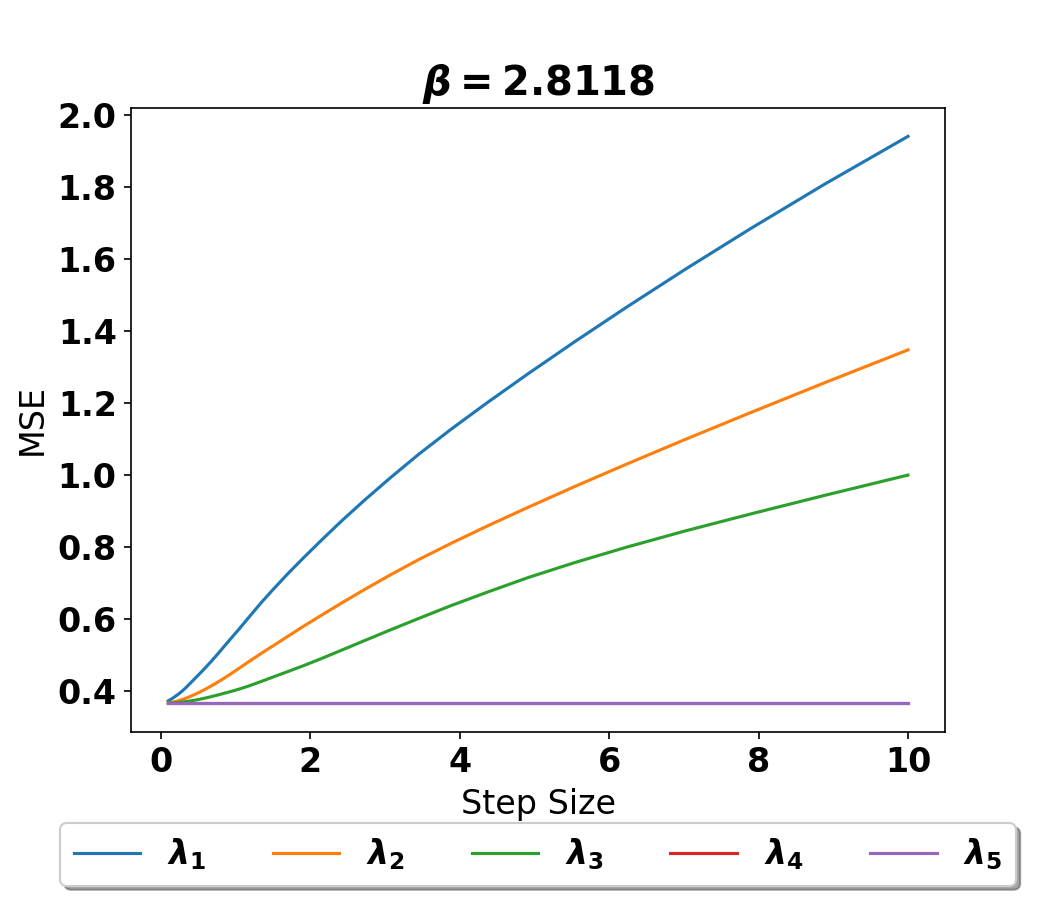}
    \includegraphics[width=0.24\linewidth]{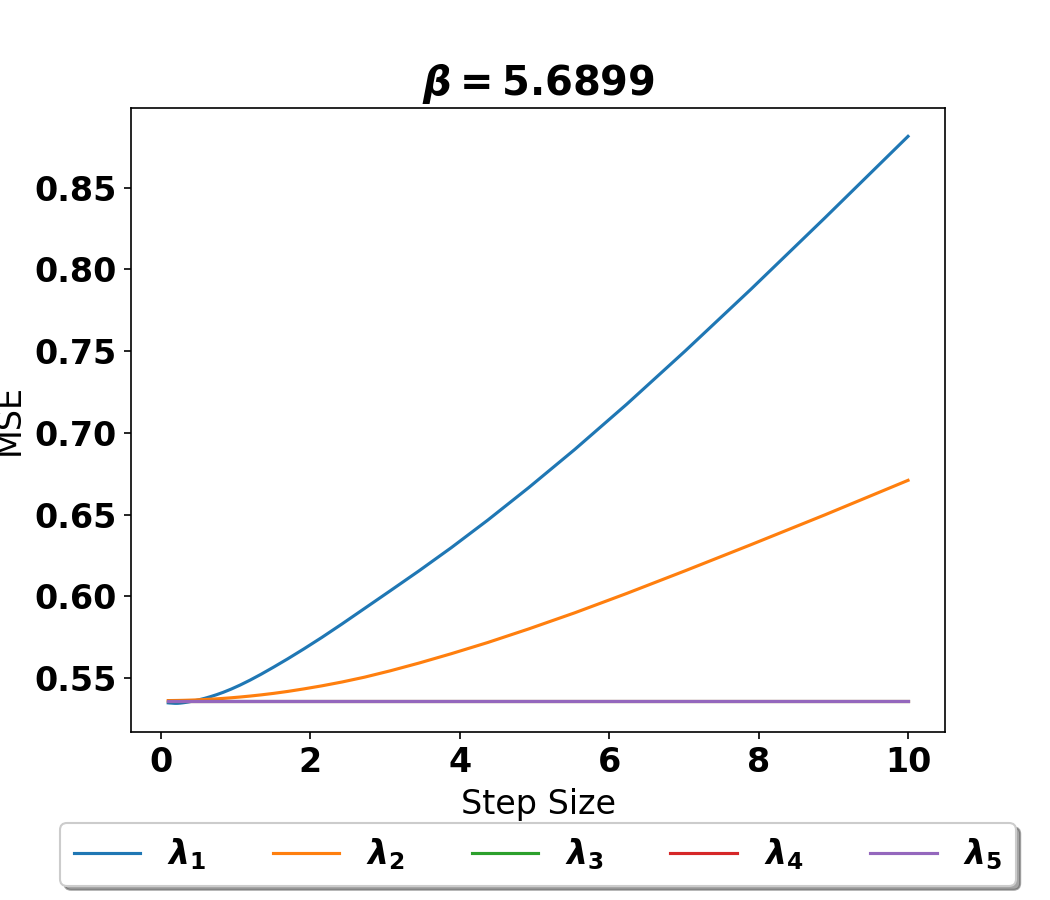}\\
    \small{(b) Robustness evaluation of $\beta-$VAE on FashionMNIST.}\\
    \includegraphics[width=0.24\linewidth]{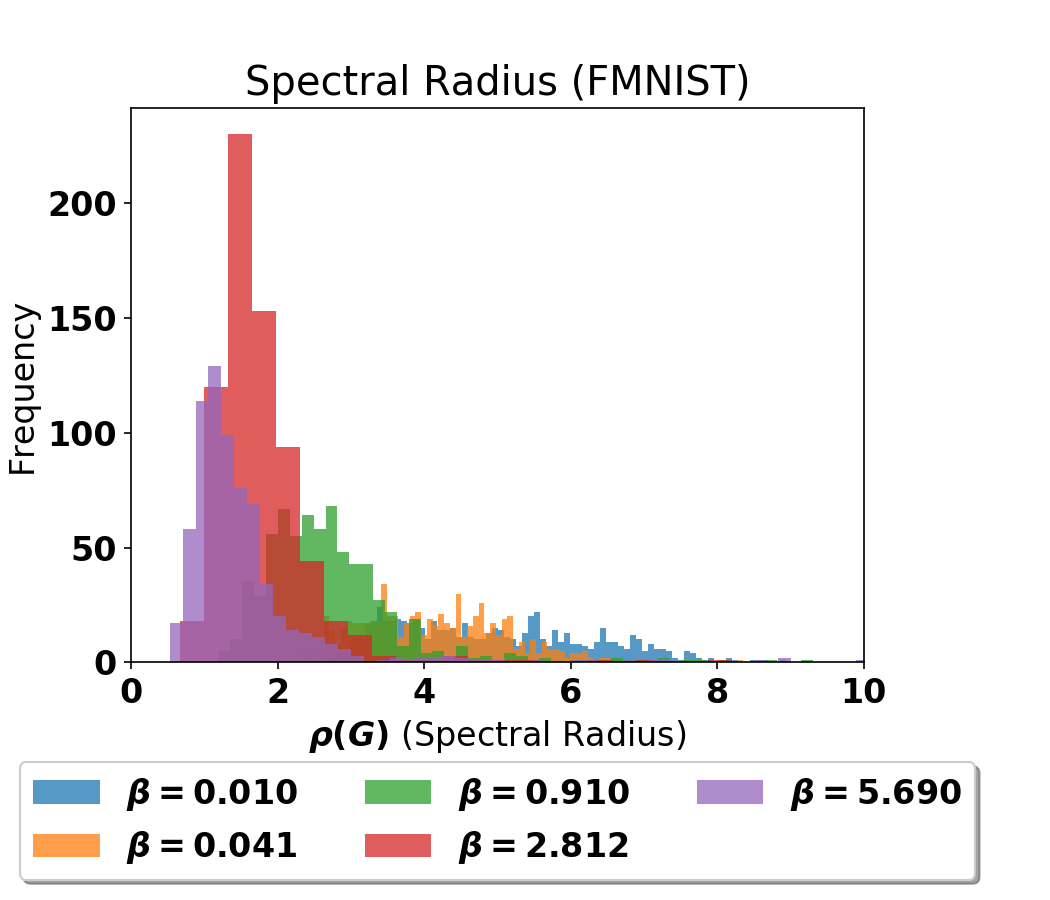}
    \includegraphics[width=0.24\linewidth]{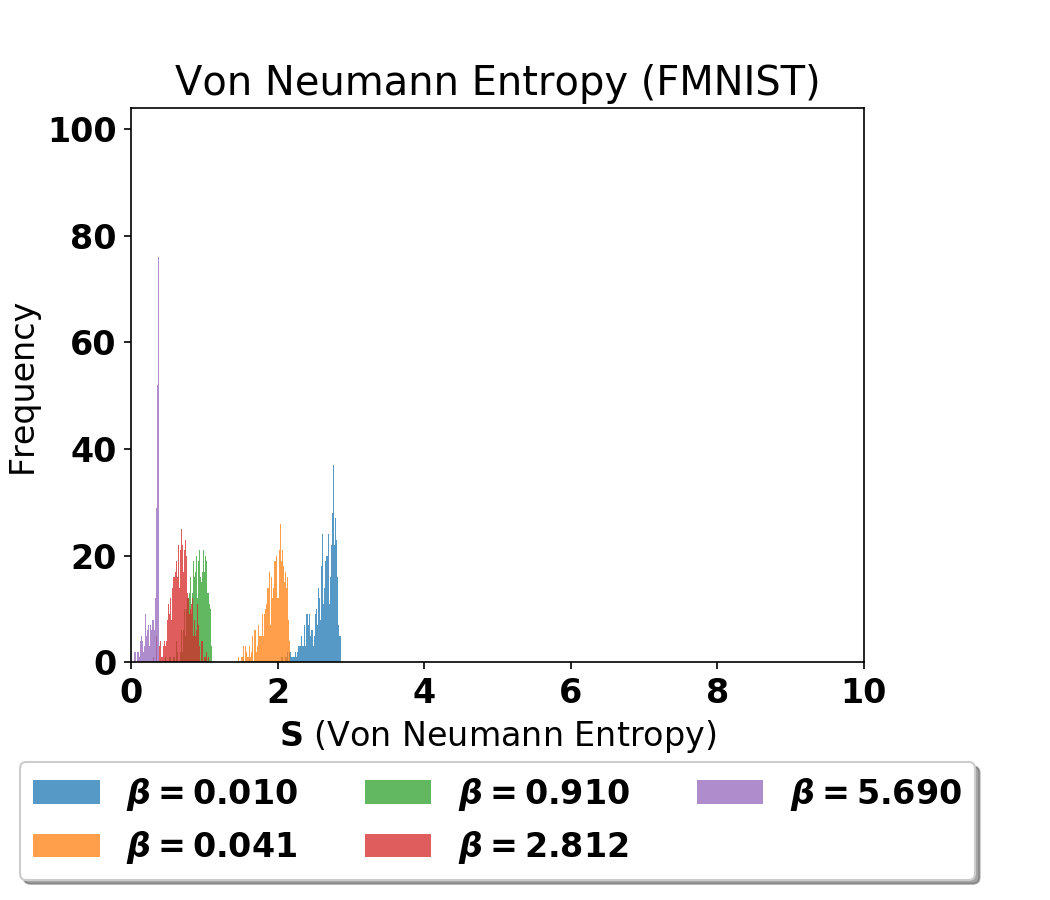}
    \includegraphics[width=0.24\linewidth]{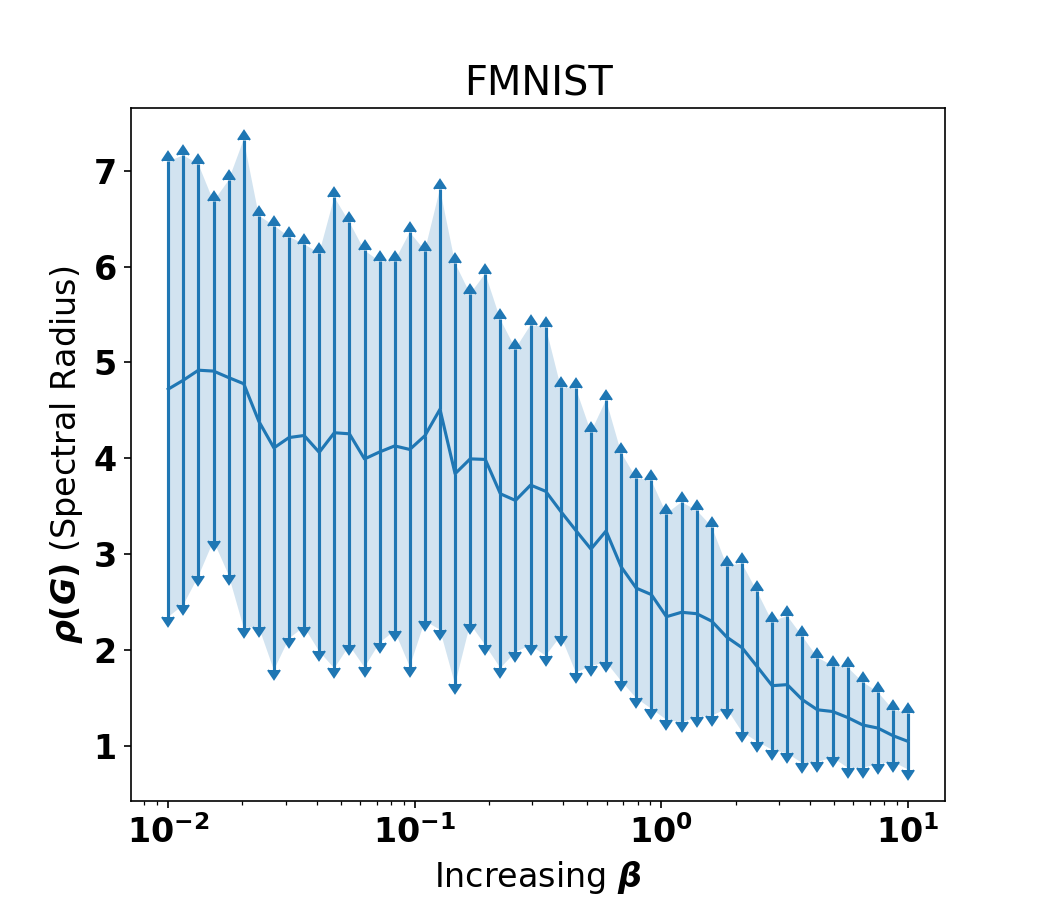}
    \includegraphics[width=0.24\linewidth]{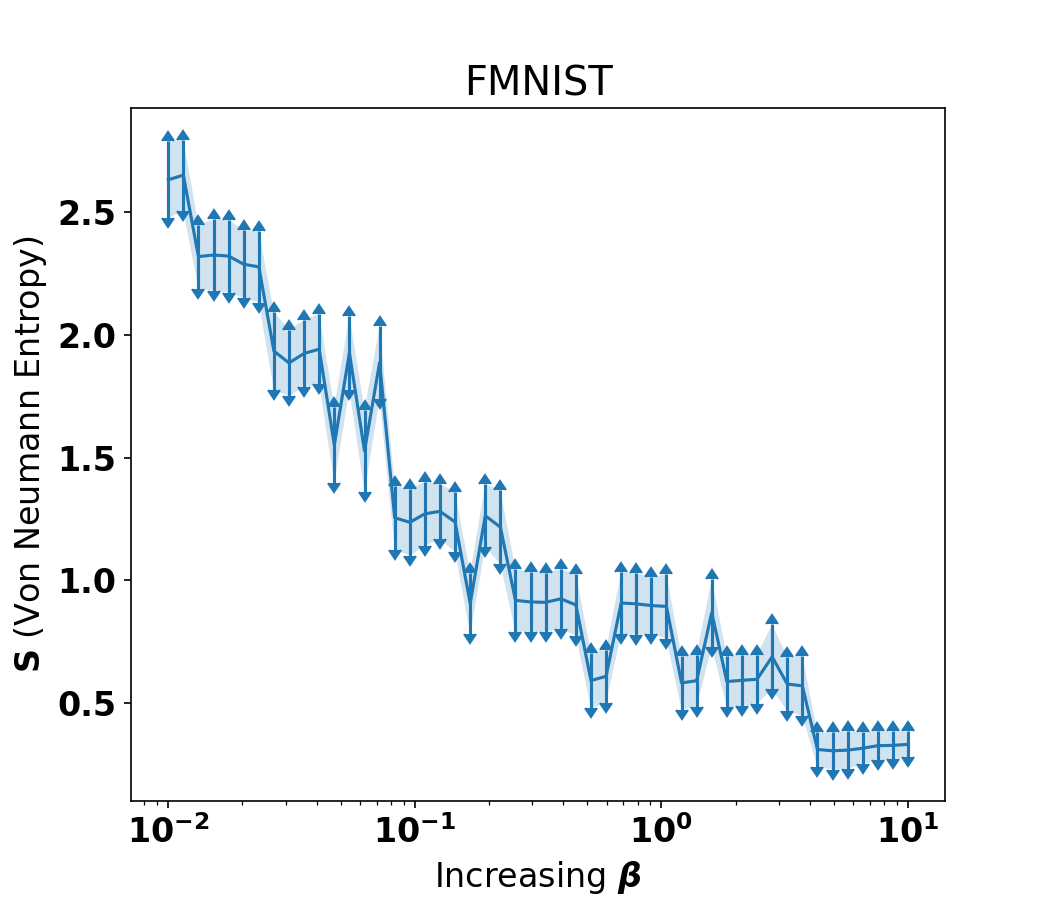}
    \includegraphics[width=0.24\linewidth]{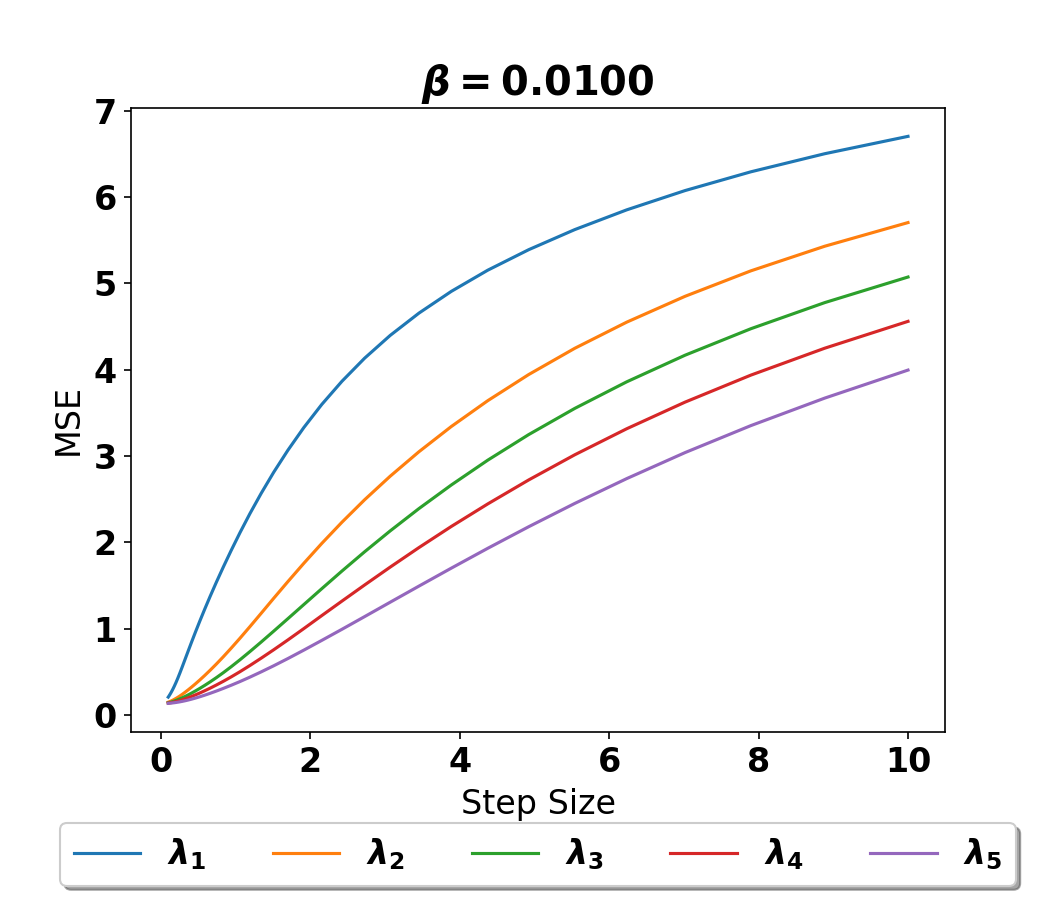}
    \includegraphics[width=0.24\linewidth]{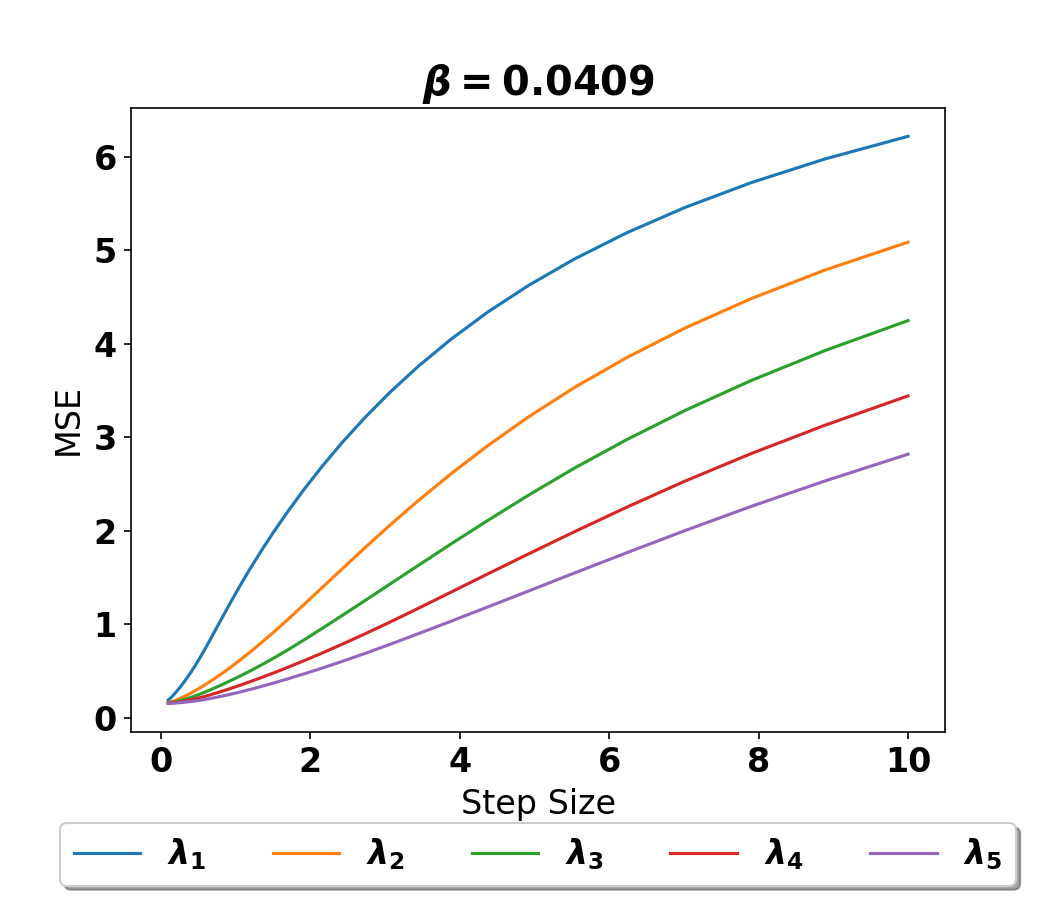}
    \includegraphics[width=0.24\linewidth]{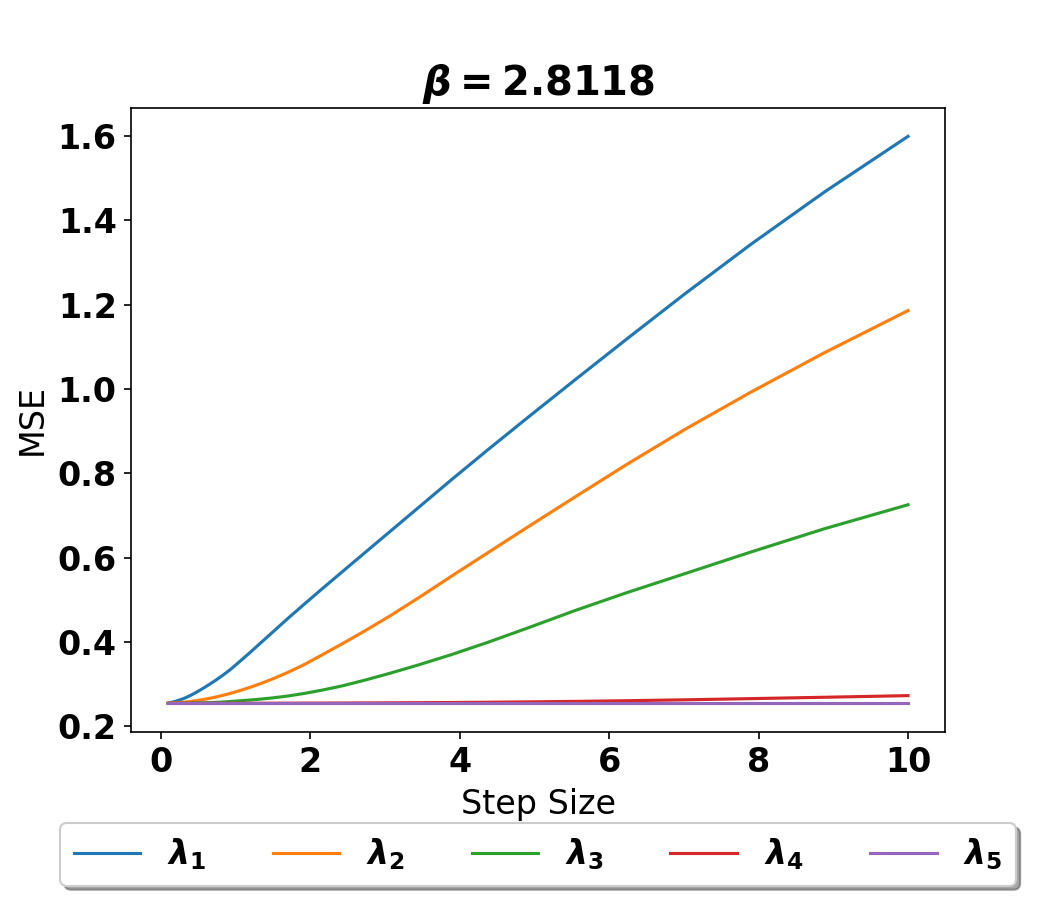}
    \includegraphics[width=0.24\linewidth]{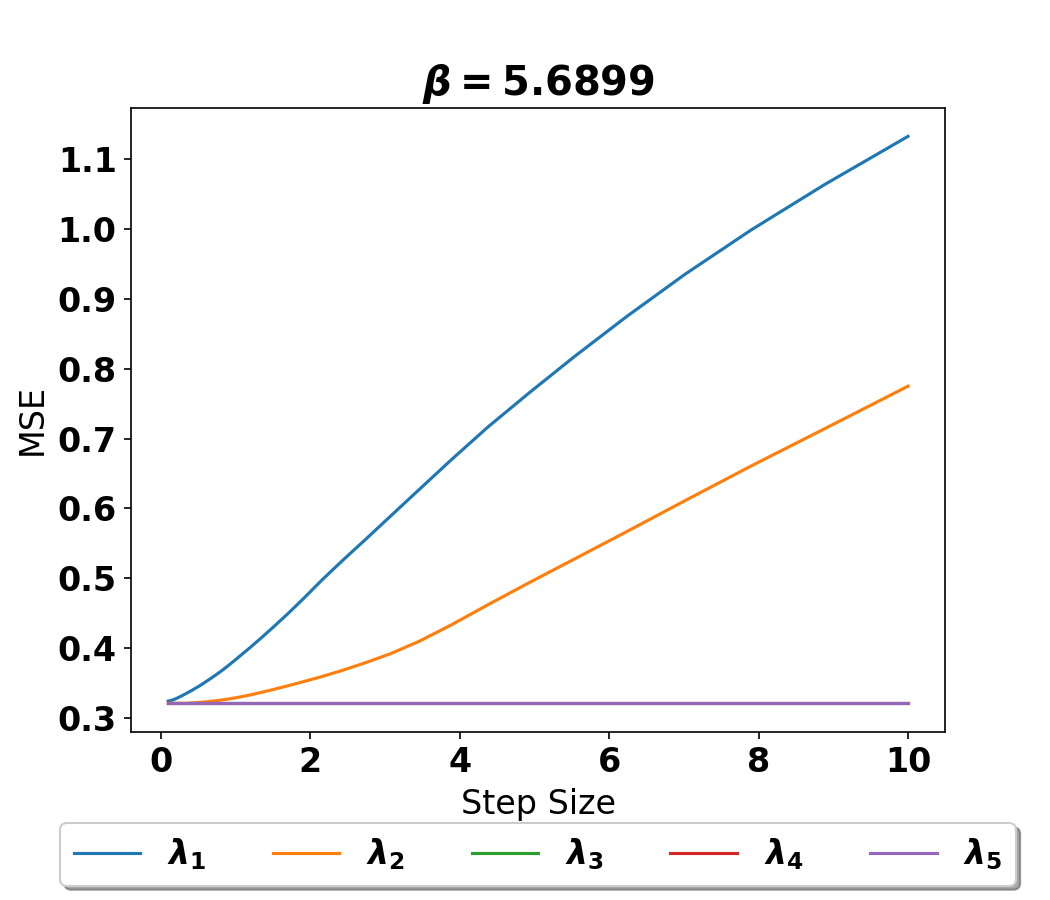}
    \caption{ Evaluation using stochastic pullback metric tensor given by Equation~\ref{eq:pullnew}. Figure (a), on the left, we report the histogram of spectral radius and Von Neumann entropy (on test samples) for different values of $\beta$ in $\beta$-VAE. On the right, we report the average of two scores across test samples for an increasing value of $\beta$. Increasing the value of $\beta$ suppresses the metric tensor's maximum eigenvalue, and the eigenspectrum distribution gets more isotropic. In the second row, we corrupt the test images along the top five eigendirections (denoted by $\lambda_1, \lambda_2, \lambda_3, \lambda_4, \text{ and } \lambda_5$) with an increasing step size for different values of $\beta$. The plots describe the average MSE across test samples. We observe for a higher value of $\beta$, the average step size increases. Increasing the value of $\beta$ reduces the \textit{posterior-prior gap}, minimising distortion in the latent space. Figure (b) demonstrates similar observations on the FashionMNIST dataset.}
    \label{fig:evaluationdec}
\end{figure*}

\end{document}